\pgfplotsset{compat=1.7}
\newcommand{\qedherenew}{\quad\Box}
\newenvironment{proofwoqed}{\noindent {\textbf{Proof:}}}{}
\newcommand{\indic}[1]{\mathds{1}\{#1\}}
\newcommand{\ie}{i.\,e.\xspace}
\newcommand{\ronemax}{$r$\nobreakdash-OneMax\xspace}
\newcommand{\gonemax}{\textit{G}\nobreakdash-OneMax\xspace}
\newcommand{\rcga}{$r$\nobreakdash-cGA\xspace}
\begin{document}

\title{Runtime Analysis of a Multi-Valued Compact Genetic Algorithm on Generalized \text{OneMax}\thanks{This work was supported by a grant by the Danish Council for Independent Research (10.46540/2032-00101B).}}
\titlerunning{Runtime Analysis of \rcga on Generalized \text{OneMax}}
% If the paper title is too long for the running head, you can set
% an abbreviated paper title here
%
\author{Sumit Adak \and
Carsten Witt}
\authorrunning{S. Adak and C. Witt}
% First names are abbreviated in the running head.
% If there are more than two authors, 'et al.' is used.
%  
\institute{DTU Compute, Technical University of Denmark\\ Kgs. Lyngby, Denmark\\
\email{\{suad,cawi\}@dtu.dk}}
\maketitle              % typeset the header of the contribution
\begin{abstract}

A class of metaheuristic techniques called estimation-of-distr\-ibution algorithms (EDAs) are employed in optimization as more sophisticated substitutes for traditional strategies like evolutionary algorithms. EDAs generally drive the search for the optimum by creating explicit probabilistic models of potential candidate solutions through repeated sampling and selection from the underlying search space.

Most theoretical research on EDAs has focused on pseudo-Boolean optimization. Jedidia et al.\ (GECCO 2023) proposed the first EDAs for optimizing problems involving multi-valued decision variables. By building a framework, they have analyzed the runtime of a multi-valued UMDA on the $r$-valued \textsc{LeadingOnes} function. Using their framework, here we focus on the multi-valued compact genetic algorithm (\rcga) and provide a first runtime analysis of a generalized \text{OneMax} function.

To prove our results, we investigate the effect of genetic drift and progress of the probabilistic model towards the optimum.
%on the multi-valued version of \text{OneMax} problem. 
After finding the right algorithm parameters, we prove that the \rcga solves this $r$-valued \text{OneMax} problem efficiently. We show that with high probability, the runtime bound is $\bigo(r^2 n \log^2 r \log^3 n)$. At the end of experiments, we state one conjecture related to the expected runtime of another variant of multi-valued \text{OneMax} function.

\keywords{Estimation-of-distribution algorithms  \and multi-valued compact genetic algorithm \and genetic drift \and \text{OneMax}.}
\end{abstract}

\section{Introduction}
\label{section:intro}

The term estimation-of-distribution algorithms (EDAs) refers to a type of algorithm that creates a probabilistic model that is subsequently utilized to produce new search points based on past searches. The fundamental distinction from evolutionary algorithms (EAs) are that they evolve a probabilistic model rather than a population. The performance of EDAs can be more efficient than EAs 
\cite{benbaki2021rigorous,doerr2021runtime,hasenohrl2018runtime,witt2023majority}. EDAs are constructed by carrying out three basic steps: first, a population of individuals is sampled using the current probabilistic model; second, the fitness of this population is determined; and finally, a new probabilistic model is generated based on the fitness of this population and the probabilistic model.

In this framework, various probabilistic models and updating methodologies provide distinct algorithms. The probabilistic model in multivariate EDAs includes inter-variable relationships. Some popular examples of multivariate EDAs include mutual-information-maximization input clustering (MIMIC) \cite{de1996mimic}, bivariate marginal distribution algorithm (BMDA) \cite{pelikan1999bivariate}, extended compact genetic algorithm (ecGA) \cite{harik2006linkage}, and many more. Univariate EDAs are another sort of EDA in which the probabilistic model's positions are mutually independent. Some examples of univariate EDAs include population-based incremental learning (PBIL) \cite{baluja1994population}, univariate marginal distribution algorithm (UMDA) \cite{muhlenbein1996recombination}, compact genetic algorithm (cGA) \cite{harik1999compact}, and others as well. Because the dependencies in multivariate EDAs are difficult to analyze mathematically, the majority of EDAs theoretical studies focus on univariate models \cite{Krejca2020}. This manuscript likewise focuses on univariate EDAs.

In conventional genetic algorithms on bit strings, the frequency of bit values in the population are controlled by the bit's contribution to fitness as well as random fluctuations caused by other bits with a higher influence on fitness. Random fluctuations can even cause particular bits to converge to a single value that differs from the best solution. This effect is precisely known as \textit{genetic drift} \cite{kimura1964diffusion,asoh1994mean}. Genetic drift is also observed in EDAs. According to Krejca and Witt \cite{Krejca2020}, genetic drift in EDAs is a broad concept of martingales, which are random processes with zero expected change that finally may stop at absorbing bounds of the underlying interval. Furthermore, Witt \cite{witt2019upper} and Lengler et al.\ \cite{lengler2018medium} demonstrated that, depending on the parameter, genetic drift might result in a significant performance loss on the \text{OneMax} function.

Classical evolutionary algorithms are commonly employed for a variety of search spaces, whereas EDAs have traditionally been used for issues involving binary decision variables. In \cite{Jedidia-GECCO-2023}, Jedidia et al.\ take the initial steps toward applying EDAs for situations involving decision variables with more than two values. They specify univariate EDAs for multivariable decision factors. They treat a multi-valued problem by introducing $r$ probability values for each variables. Particularly, by building a framework, they analyzed the runtime of multi-valued UMDA on the $r$-valued \textsc{LeadingOnes} function. This article also deals with multi-valued EDAs by using their framework.  

In this paper, we study the $r$-valued compact genetic algorithm (\rcga) as essentially defined in \cite{Jedidia-GECCO-2023} and provide a first runtime analysis of a generalized \text{OneMax} function. In their framework \cite{Jedidia-GECCO-2023}, the authors have used the frequencies with some specific border restriction, but in our work there is no such border restriction. We perform a mathematical runtime of the \rcga on the $r$-valued \text{OneMax} function. Here, we study the \ronemax function. We bound the runtime with high probability. In our final analysis, we establish that the \rcga can optimize the \ronemax problem in $\bigo(r^2 n\log^2 r \log^3 n)$ (Theorem~\ref{theorem:rcgacomplexity}). Further, we compare our bound with the bound of binary cGA and mention that our bound probably is not tight. At the end of this work, we state one conjecture related to the expected runtime of another variant of $r$-valued \text{OneMax}, which is named \gonemax. 

The article is organized as follows: Section~\ref{section:RW} describes the previous works related to our technical topics. The following section establishes the terminologies and defines the multi-valued \text{OneMax} function. Section~\ref{section:framework} elaborates on the multi-valued EDA framework for \rcga. Sections~\ref{section:drift} and~\ref{section:runtime} include the major technical results, genetic drift analysis, and the runtime evaluation of \rcga on the $r$-valued \text{OneMax}. The experiments in Section~\ref{section:experiments} demonstrate the average runtime throughout the entire parameter range for the hypothetical population size ($K$). Finally, the manuscript concludes with a brief summary.

\section{Related Work}
\label{section:RW}

This work is separated into three technical topics: the first one is the framework for EDAs, the next one concerns genetic drift, and finally a runtime evaluation on the multi-valued \text{OneMax} function. There are numerous theoretical papers on traditional evolutionary algorithms for multi-valued decision variables. 

Model-based optimization approaches have enabled EDAs to tackle a wide range of large and complicated problems \cite{droste2006rigorous,DROSTE200251,sudholt2019choice}. Droste conducted the first rigorous runtime analysis of an EDA \cite{droste2006rigorous}. For a simple EDA, the compact genetic algorithm (cGA) was used with the \text{OneMax} function. They provide an overall lower and upper bounds for all functions. It was also noticed that EDAs optimize problems in an entirely different way, as seen by the difference in runtime given on two linear functions, which contrasts the well-known analysis of how the (1+1)~EA optimizes linear functions by Droste et al.\ \cite{DROSTE200251}. 

Most theoretical research on EDAs has focused on pseudo-Boolean optimization \cite{dang2015simplified,friedrich2016edas,krejca2017lower}. Jedidia et al.\ \cite{Jedidia-GECCO-2023} recently introduced the EDAs for optimizing problems involving more than two decision variables from the domain $\{0,\dots,r-1\}^n$. They prove that the multi-valued EDAs solve the $r$-valued \textsc{LeadingOnes} problem efficiently. Overall, they demonstrate how EDAs can be tailored to multi-valued issues and used to assist define their parameters. At present, there is a very active research area analyzing EDAs on complex problems. Further, in \cite{Krejca2020,larranaga2001estimation,pelikan2015estimation}, one can find out more details about theory and practice of EDAs.

\section{Preliminaries}
\label{section:preliminaries}

We are considering the $r$-valued compact genetic algorithm (\rcga) introduced by \cite{Jedidia-GECCO-2023} to maximize an $r$-valued \text{OneMax} function. Here, we're looking at the maximization of functions of the kind $f\colon \{0,1,\dots,r-1\}^{n}\rightarrow \mathbb{R}$, which is named $r$-valued fitness functions. We define $f(x)$ as the \emph{fitness} of $x$, where $x \in \{0,1,\dots,r-1\}^{n}$ is an individual.

In this work, we discuss two different types of multi-valued \text{OneMax} functions: one is \ronemax and another is \gonemax. Let $n\in \mathbb{N}_{\geq 1}$ and $r\in \mathbb{N}_{\geq 2}$. In the following, we give the definition of \ronemax and \gonemax, where, for all $x = (x_1, \dots, x_n) \in \{0,1,\dots,r-1\}^{n}$,
\begin{align*}
r\text{-OneMax}(x) \coloneqq  \sum_{i=1}^{n} \indic {x_{i} = r-1} 
\text{\quad and\quad}
G\text{-OneMax}(x) \coloneqq \sum_{i=1}^{n} x_i
\end{align*}

In both functions, the single maximum is the string all-$(r-1)$s. However, a more general variant can be defined by selecting a random optimum $a\in \{0,\dots,r-1\}^{n}$ and defining, for all $b\in \{0,\dots,r-1\}^{n}$, $r$-$\text{OneMax}_{a}(b) = n - d(b,a)$, where $d(b,a) = \sum_{i=1}^{n} \indic{b_i\neq a_i}$ indicates the distance between the two strings $a$ and $b$. In the same way, we can define the generalized \gonemax. For an arbitrary optimum $a\in \{0,\dots,r-1\}^{n}$, and defining, for all $b\in \{0,\dots,r-1\}^{n}$, \textit{G}-$\text{OneMax}_{a}(b) = n\cdot (r-1) - d (b,a)$ where $d (b,a) = \sum_{i=1}^{n} \min\{\vert a_i - b_i\vert, r-\vert b_i - a_i\vert\} $ denotes the distance between the strings $a$ and $b$. The difference between \ronemax and \gonemax is \ronemax only distinguishes 
between whether value~$r-1$ is taken at a position or not, while \gonemax takes all $r$ values per position into account. The maximum fitness value for \ronemax is $n$, and for \gonemax it is $n(r-1)$.

A random variable $Z$ is said to \textit{stochastically dominate} another random variable $Y$, denoted by $Z\succeq Y$, if and only if for all $\lambda \in\mathbb{R}$ we have $\Pr[Z\leq \lambda]\leq \Pr[Y\leq \lambda]$.

\section{The Framework}
\label{section:framework}
In \cite{Jedidia-GECCO-2023}, Jedidia et al.\ proposed a framework for EDAs to optimize fitness functions with $r$-values. In this paper, we adopt their framework to characterize the underlying probabilistic model. Here, we concentrate on the $r$-valued cGA. 

\textbf{\rcga}: The Compact genetic algorithm (cGA) \cite{harik1999compact} is a widely used univariate EDA. The cGA has only one parameter, $K\in\mathbb{R}_{>0}$, which refers to the so-called hypothetical population size \cite{doerr2021runtime} and it maintains a vector of probabilities (called frequencies). In each iteration, two solutions are created independently. After computing the fitness value, it changes each frequency by $1/K$ so that the frequency of the better sample increases while the frequency of the worse sample decreases. The \rcga is an expanded variant of cGA that takes into account multiple variables. The \rcga, outlined in Algorithm~\ref{algorithm:r-cGA-rOneMax}, employs marginal probabilities (again denoted  as frequencies) $p^{(t)}_{i,j}$ that correspond to the probability of position $i$ and value $j$ at time $t$. In each iteration, the sampling distribution generates two solutions $x$ and $y$ independently. After that, the fitter offspring is determined among $x$ and $y$, and the frequencies are modified by a step amount of $\pm 1/K$ in the prospective direction of the better offspring for positions where both offspring vary. In this way, $K$ indicates the strength of the probabilistic model update. 

The probabilistic model for \rcga is an $n\times r$ matrix (the frequency matrix), with each row $i\in\{1,\dots,n\}$ forming a vector $p_i \coloneqq (p^{(t)}_{i,j})_{j\in \{0,\dots,r-1\}}$ (the frequency vector at position $i$). After the update, each frequency vector in the \rcga sums to 1, since one frequency is increased by $1/K$ and one frequency is decreased by the same quantity. In general, we are interested in how many function evaluations the \rcga performs before sampling the optimum. This quantity is also referred to as runtime or optimization time.

\vspace{-1em}
\begin{algorithm}[h]
\caption{$r$-valued Compact Genetic Algorithm ($r$-cGA) for the maximization of $f : \{0,\dots,r-1\}^n \rightarrow \mathbb{R}$}
\label{algorithm:r-cGA-rOneMax}
\KwData{$t \gets 0$ \hspace{18em} $p^{(t)}_{i,0} \gets p^{(t)}_{i,1} \gets  p^{(t)}_{i,2} \dots \gets p^{(t)}_{i,r-1} \gets \frac{1}{r}$ where $i\in \{1, 2, \dots, n\}$}
%\KwResult{}
\While{termination criterion not met}{
\For{$i\in \{1, 2, \dots, n\}$}{
$x_{i} \gets j$ with probability $p^{(t)}_{i,j}$ w.r.t. $j=0,\dots,r-1$, independently for all $i$ \\
$y_{i} \gets j$ with probability $p^{(t)}_{i,j}$ w.r.t. $j=0,\dots,r-1$, independently for all $i$ \\
}
\If{$f(x) < f(y)$}{
\For{$i\in \{1, 2, \dots, n\}$}{
swap $x_i$ and $y_i$
}
}
\For{$i\in \{1, 2, \dots, n\}$}{
\For{$j\in \{0, 1, \dots, r-1\}$}{
$p^{(t+1)}_{i,j} \gets p^{(t)}_{i,j} + \frac{1}{K} (\indic{x_{i} = j} - \indic{y_{i}=j})$}
}
$t\gets t + 1$
}
\end{algorithm}\vspace{-1em}

\noindent \textbf{The probabilistic model}: This paragraph depicts the stochastic process in the algorithm. Let $p^{(t)}_{i,j}$ be the marginal probability at time $t$ for arbitrary position $i$ and value $j$ where $(i,j)\in \{1,\dots,n\}\times \{0,\dots,r-1\}$. An $r$-valued EDA's probabilistic model is an $n\times r$ matrix of $(p^{(t)}_{i,j})_{i,j}$ (the frequency matrix), with each row $i$ forming a frequency vector of probabilities that sum to 1. When constructing an individual $x\in \{0,\dots,r-1\}^{n}$, for all $i\in \{1,\dots,n\}$ and all $j \in \{0,\dots,r-1\}$, the probability that $x_i$ has value $j$ is $p^{(t)}_{i,j}$. For every $y\in \{0,\dots,r-1\}^{n}$, we can state that $\Pr[x=y] = \Pi_{i\in \{1,\dots,n\}} \Pi_{j\in \{0,\dots,r-1\}} (p^{(t)}_{i,j})^{\indic{y_{i}=j}}$, where we assume that $0^{0}=1$. Furthermore, the frequency matrix is initialized so that each frequency equals $1/r$, indicating a uniform distribution. After each iteration, one ensures that each row sum amounts to 1.

Note that, in this work, contrary to the model in \cite{Jedidia-GECCO-2023}, the marginal probabilities are not restricted to some specific intervals. In general, the lower and upper borders on frequencies are 0 and 1. We do not use the borders from~\cite{Jedidia-GECCO-2023}, the reason is that because of those borders the analysis will be much more complicated. 
We make the following \textit{well-behaved frequency assumption}: the \rcga of any two frequencies can vary by a factor of $1/K$. In the absence of borders, the \rcga can employ frequencies in $\{0,1/K,2/K,\dots,1\}$, where $1/r$ is a multiple of $1/K$.

Clearly, $p^{(t)}_{i,j}$ is a random variable, and its change of value in one step is defined as $\Delta_{i,j}\coloneqq \Delta^{t}_{i,j}\coloneqq p^{(t+1)}_{i,j} - p^{(t)}_{i,j}$. Therefore, we can write $\Delta_{i,r-1}\coloneqq p^{(t+1)}_{i,r-1} - p^{(t)}_{i,r-1}$. This change is determined by whether the value of position $i$ influences the decision to update with respect to the first string $x$ sampled at time $t$ or the second string $y$. Particularly, we inspect the changes in the \ronemax value at all positions except $i$. To achieve this, we define $D_{i} := \left( \sum_{j\neq i} \indic{x_{j}=r-1} - \sum_{j\neq i} \indic{y_{j}=r-1}\right)$.

At this point, \rcga experiences two different kinds of steps which we discuss below. The following analysis and the terms ``rw-steps'' and ``biased steps'' follow closely the one from \cite{sudholt2019choice} for the binary cGA.\\
\textit{Random-walk steps:} If $\vert D_{i}\vert \geq 2$, position $i$ has no impact on the decision to update with respect to string $x$ or $y$. With $\Delta_{i,r-1} \neq 0$, it is necessary that position $i$ for value $r-1$ is sampled differently. That means, the value of $p^{(t)}_{i,r-1}$ will be increased or decreased by $1/K$ with equal probability $p^{(t)}_{i,r-1} (1 - p^{(t)}_{i,r-1})$. Otherwise, based on the remaining probability, it holds $p^{(t+1)}_{i,r-1} = p^{(t)}_{i,r-1}$. Now, we can describe this by taking a variable $F_{i}$ where

\[F_{i}\coloneqq \begin{cases}
+1/K & \text{with probability } p^{(t)}_{i,r-1} (1 - p^{(t)}_{i,r-1}),\\
-1/K & \text{with probability }  p^{(t)}_{i,r-1} (1 - p^{(t)}_{i,r-1}),\\
0 & \text{with the remaining probability.}
\end{cases}\]

A step where $\vert D_{i}\vert \geq 2$ is referred to as a \textit{random-walk step (rw-step)} because the process is a fair random walk (along with self-loops) as $E(\Delta_{i,r-1}\mid p^{(t)}_{i,r-1}, \vert D_{i}\vert \geq 2) = \E(F_{i}\mid p^{(t)}_{i,r-1}) = 0$.

If $D_{i} =1$, then  $\left(\sum^{n}_{i=1} \indic{x_{i}=r-1}\right) \geq  \left(\sum^{n}_{i=1} \indic{y_{i}=r-1}\right)$ and for that strings $x$ and $y$ are never swapped in the \rcga. So, as previous, here we obtain the same argumentation. In addition, the process also executes a \textit{rw-step}.\\
\textit{Biased steps:} If $D_{i} =-1$, then the strings $x$ and $y$ are swapped unless position~$i$ is sampled as $x_{i}=r-1$ and $y_{i}\neq r-1$. As a result, both events of sampling position $i$ raise the $p^{(t)}_{i,r-1}$ value in different ways. So, we obtain $\Delta_{i,r-1} = 1/K$ with probability $2p^{(t)}_{i,r-1} (1 - p^{(t)}_{i,r-1})$ and $\Delta_{i,r-1} = 0$ else. 

If $D_{i} = 0$, then both the events of sampling position $i$ raise the $p^{(t)}_{i,r-1}$ value differently, similar to the previously examined situation ($D_{i} = -1$). And, again we have $\Delta_{i,r-1} = 1/K$ with  probability $2p^{(t)}_{i,r-1} (1 - p^{(t)}_{i,r-1})$ and $\Delta_{i,r-1} = 0$ otherwise. Let us take a random variable $B_{i}$ such that 

\[B_{i}\coloneqq \begin{cases}
+1/K & \text{with probability } 2p^{(t)}_{i,r-1} (1 - p^{(t)}_{i,r-1}),\\
0 & \text{with the remaining probability.}
\end{cases}\]

For $D_{i} =-1$ and $D_{i} = 0$, we can conclude that $\Delta_{i,r-1}$ follows the same distribution as $B_{i}$. A \textit{biased step (b-step)} occurs when $\E(\Delta_{i,r-1}\mid p^{(t)}_{i,r-1}, D_{i}\in\{-1,0\})=$ $\E(B_{i}\mid p^{(t)}_{i,r-1})=$ $2p^{(t)}_{i,r-1}(1-p^{(t)}_{i,r-1})/K >0$. 

The event whether a step is a \textit{rw-step} or a \textit{b-step} for position~$i$ is merely depending on external factors that are stochastically independent of the outcome of position~$i$. Let $R_{i}$ represent the occurrence where $D_{i} =1$ or $\vert D_{i}\vert \geq 2$. We arrive at the following equality: 
\begin{equation}
\label{equation:calDelta}    
\Delta_{i,r-1} = F_{i}\cdot \indic{R_i} + B_{i}\cdot \indic{\overline{R_i}} 
\end{equation}
which we call \textit{superposition}. Informally, the change in $p^{(t)}_{i,r-1}$ value is a superposition of a unbiased random walk and biased steps. 

\section{Genetic Drift for the $r$-valued cGA}
\label{section:drift}

We show an upper bound on the influence of genetic drift for $r$-valued EDAs, similar to \cite{Doerr2020ITEV,Jedidia-GECCO-2023}. This enables us to select parameter values for EDAs that avoid the often undesirable effect of genetic drift. This section provides a general overview of genetic drift, followed by a concentration result for neutral positions. Finally, there is an upper bound for positions with \textit{weak preference}. Some proofs were not included in this paper due to limitations on space. They are included in the Appendix.

In EDAs, genetic drift occurs when a frequency does not reach extreme values 1 or 0 as a result of a clear signal from the objective function, but rather as a result of random fluctuations caused by the process's stochasticity. Researchers have explored genetic drift in EDAs explicitly \cite{shapiro2002sensitivity,shapiro2005drift,shapiro2006diversity} and conducted numerous runtime analyses \cite{witt2018domino,witt2019upper,doerr2020univariate,sudholt2019choice,droste2005not,lengler2021complex}. We analyze genetic drift for multi-valued EDAs, specifically the \rcga, based on insights from \cite{Jedidia-GECCO-2023} and the framework from~\cite{Doerr2020ITEV}.

Genetic drift is typically examined using a fitness function in the \textit{neutral} position. Let $f$ denote an $r$-valued fitness function. A position $i\in \{1,\dots,n\}$ is called \textit{neutral} (in relation to $f$), if and only if $x_i$ has no effect on the value of $f$ for all $x\in \{0,\dots,r-1\}^{n}$. That is, for all $x,x'\in \{0,\dots,r-1\}^{n}$ such that $x_{j}=x'_{j}$ for all $j\in \{1,\dots,n\} \setminus\{i\}$, we have $f(x)=f(x')$.

The frequencies of neutral variables in traditional EDAs without margins create martingales, which is useful for analyzing genetic drift \cite{Doerr2020ITEV}. This finding applies to EDAs with binary representation. Furthermore, the concept can be carried over to $r$-UMDA \cite{Jedidia-GECCO-2023}, too. We make this argument specific to the \rcga. 
% Due to page restrictions, some of our proofs are not included in this paper. One can find them in the Appendix.

\begin{lemma}
\label{lemma:neutral-freq-martingale} 
Let $f$ be an $r$-valued position and $i\in\{1,\dots,n\}$ be a neutral position of $f$. Consider the \rcga without margins optimizing $f$. Then, for each $j\in\{0,\dots, r-1\}$, the frequencies $(p^{(t)}_{i,j})_{t\in\mathbb{N}}$ are a martingale.
\end{lemma}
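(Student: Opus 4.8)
The plan is to verify the defining martingale identity directly, namely $\mathbb{E}[p^{(t+1)}_{i,j} \mid \mathcal{F}_t] = p^{(t)}_{i,j}$ for every $j\in\{0,\dots,r-1\}$, where $\mathcal{F}_t$ denotes the history up to and including time $t$ (equivalently, the current frequency matrix). Integrability is immediate since every frequency lies in $[0,1]$, so the only substantive task is to show that the one-step expected change $\mathbb{E}[\Delta_{i,j}\mid\mathcal{F}_t]$ vanishes, where $\Delta_{i,j}=\frac{1}{K}(\indic{x_i=j}-\indic{y_i=j})$ and $x,y$ denote the two samples \emph{after} the possible swap dictated by the fitness comparison in Algorithm~\ref{algorithm:r-cGA-rOneMax}.

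First I would introduce the ``raw'' samples $\tilde{x},\tilde{y}$ produced before the swap and record that, conditioned on $\mathcal{F}_t$, their $i$-th coordinates $\tilde{x}_i,\tilde{y}_i$ are independent and identically distributed with $\Pr[\tilde{x}_i=j\mid\mathcal{F}_t]=\Pr[\tilde{y}_i=j\mid\mathcal{F}_t]=p^{(t)}_{i,j}$. The crucial structural observation is neutrality: since $f$ does not depend on coordinate $i$, the swap indicator $S\coloneqq\indic{f(\tilde{x})<f(\tilde{y})}$ is a function of the coordinates at positions $\{1,\dots,n\}\setminus\{i\}$ only. Because all coordinates of $\tilde{x}$ and $\tilde{y}$ are sampled independently, it follows that, conditioned on $\mathcal{F}_t$, the event $S$ is stochastically independent of the pair $(\tilde{x}_i,\tilde{y}_i)$.

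With this independence established, I would split according to whether a swap occurs. On $\overline{S}$ we have $(x_i,y_i)=(\tilde{x}_i,\tilde{y}_i)$, whereas on $S$ we have $(x_i,y_i)=(\tilde{y}_i,\tilde{x}_i)$; in both cases $\mathbb{E}[\indic{x_i=j}-\indic{y_i=j}\mid\mathcal{F}_t,S]=p^{(t)}_{i,j}-p^{(t)}_{i,j}=0$, because $\tilde{x}_i$ and $\tilde{y}_i$ are exchangeable and independent of $S$. Averaging over the two cases gives $\mathbb{E}[\Delta_{i,j}\mid\mathcal{F}_t]=0$ for every $j$, which is precisely the martingale identity. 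Equivalently, exchangeability ensures that the post-swap pair $(x_i,y_i)$ has the same conditional marginal distribution as $(\tilde{x}_i,\tilde{y}_i)$, so selection never shifts the marginals at a neutral position.

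The only step requiring care is the independence claim, i.e.\ justifying that neutrality lets us treat the swap as external randomness with respect to coordinate $i$; this is the same principle underlying the superposition decomposition in Equation~\eqref{equation:calDelta}, specialized to the case where the ``biased'' component is absent. I expect this separation of the sampling at position $i$ from the fitness-driven selection to be the main, though mild, obstacle, after which the remaining symmetry computation is routine.
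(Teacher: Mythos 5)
Your proof is correct and takes essentially the same approach as the paper's: both rest on the observation that neutrality makes the swap decision stochastically independent of the sampling at position $i$, after which the symmetry between $x_i$ and $y_i$ forces the expected one-step change of $p^{(t)}_{i,j}$ to vanish. The only difference is presentational: the paper asserts this independence in one sentence and then enumerates the resulting three-case distribution of $p^{(t+1)}_{i,j}$ (increase by $1/K$ or decrease by $1/K$, each with probability $p^{(t)}_{i,j}(1-p^{(t)}_{i,j})$, else unchanged) and computes the expectation directly, whereas you justify the independence explicitly via the pre-swap samples and conclude by exchangeability, conditioning on the swap indicator.
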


\begin{proof}
Since the algorithm has no margins, there are no restrictions in each iteration $t\in\mathbb{N}$, hence it holds that ${p}^{(t+1)}_{i,j} = p^{(t)}_{i,j} + \frac{1}{K} (\indic{x_{i} = j} - \indic{y_{i}=j})$. Since $i$ is neutral, the values at position $i$ do not affect the decision of updating the frequency with respect to $x$ and $y$. Particularly, updating of frequency depends on the three cases with respect to $x_i$ and $y_i$ ($x_i=j$ and $y_i\neq j$ constitutes the first case, $x_i\neq j$ and $y_i=j$ represent the second case, and all the remainder belongs to the third case): 
\[p^{(t+1)}_{i,j}\coloneqq \begin{cases}
p^{(t)}_{i,j}+1/K & \text{with probability }  p^{(t)}_{i,j} (1 - p^{(t)}_{i,j}),\\
p^{(t)}_{i,j}-1/K & \text{with probability }  p^{(t)}_{i,j} (1 - p^{(t)}_{i,j}),\\
p^{(t)}_{i,j} & \text{with probability }  1 - 2p^{(t)}_{i,j} (1 - p^{(t)}_{i,j}).
\end{cases}\]
By combining the three cases, we get
\begin{align*}
 & \E(p^{(t+1)}_{i,j}\mid p^{(t)}_{i,j}) \\
& = (p^{(t)}_{i,j}+1/K)\cdot p^{(t)}_{i,j}(1-p^{(t)}_{i,j}) + (p^{(t)}_{i,j}-1/K)\cdot p^{(t)}_{i,j}(1-p^{(t)}_{i,j}) + \\ & \indent p^{(t)}_{i,j}\cdot (1-2p^{(t)}_{i,j}(1-p^{(t)}_{i,j}))\\
& =  p^{(t)}_{i,j},
\end{align*}
proving the claim.
\qed\end{proof}

In \cite{Jedidia-GECCO-2023}, all frequencies of an EDA start at a value $1/r$ and analyses for smaller deviations in both direction up to $1/(2r)$. In this article, for \rcga we follow the same frequency setting starting from $1/r$ and tolerate up to $1/(2r)$ in either direction.

We apply a martingale concentration result \cite[Theorem 3.13]{mcdiarmid1998concentration} to exploit the lower sampling variance at frequencies in $\Theta (1/r)$. And, we restate an adjusted version of a theorem by McDiarmid \cite[eq. (41)]{mcdiarmid1998concentration} that was used by Doerr and Zheng \cite{Doerr2020ITEV} and by Jedidia et al.\ \cite{Jedidia-GECCO-2023}.

\begin{theorem}
\label{theorem:Hoeffding-Azuma-inequality}   
Let $a_1,\dots,a_m\in \mathbb{R}$, and $X_1,\dots,X_m$ be a martingale difference sequence with $\lvert X_k \rvert\leq a_k$ for each $k$. Then for all $\varepsilon \in \mathbb{R}_{\geq 0}$, it holds that
\[\Pr\left [\max_{k=1,\dots,m} \left\vert \sum_{i=1}^{k} X_{i}  \right\vert \geq \varepsilon\right ] \leq 2\exp{\left (-\frac{\varepsilon^{2}}{2\sum_{i=1}^{m} a^{2}_{i}}\right )}.\]
\end{theorem}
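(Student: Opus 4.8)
This statement is a maximal Hoeffding--Azuma inequality, so the plan is to combine a Chernoff-type exponential tail bound on the martingale sum with a maximal inequality that lifts a bound on a single endpoint to a uniform bound over all $k=1,\dots,m$. Write $S_k := \sum_{i=1}^{k} X_i$; since $(X_k)$ is a martingale difference sequence, $(S_k)$ is a martingale with $S_0 = 0$. The factor $2$ in front will appear only at the very end, from a union bound that treats the two events $\max_k S_k \ge \varepsilon$ and $\max_k(-S_k) \ge \varepsilon$ separately, using that $(-X_k)$ is again a martingale difference sequence obeying the same increment bounds $\lvert -X_k\rvert \le a_k$.

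The core ingredient is an exponential bound on the conditional moment generating function of each increment. Fix $\lambda > 0$. First I would prove the version of Hoeffding's lemma suited to the increments: conditioned on the past, $X_k$ has mean $0$ and takes values in $[-a_k, a_k]$, so by convexity of $x \mapsto e^{\lambda x}$ together with a second-order bound on the cumulant generating function one obtains $\E(e^{\lambda X_k} \mid X_1,\dots,X_{k-1}) \le e^{\lambda^2 a_k^2/2}$. Peeling off the increments one at a time via the tower rule then yields $\E(e^{\lambda S_m}) \le \exp(\frac{\lambda^2}{2}\sum_{i=1}^{m} a_i^2)$.

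To convert this endpoint estimate into a bound on the running maximum, I would use that $(e^{\lambda S_k})_k$ is a nonnegative submartingale --- the composition of the martingale $(S_k)$ with the convex, increasing map $x \mapsto e^{\lambda x}$ --- and invoke Doob's maximal inequality. Since $\{\max_{k\le m} S_k \ge \varepsilon\} = \{\max_{k\le m} e^{\lambda S_k} \ge e^{\lambda\varepsilon}\}$, this gives
\[\Pr\Bigl[\max_{k\le m} S_k \ge \varepsilon\Bigr] \le e^{-\lambda\varepsilon}\,\E(e^{\lambda S_m}) \le \exp\Bigl(\frac{\lambda^2}{2}\sum_{i=1}^{m} a_i^2 - \lambda\varepsilon\Bigr).\]
As this holds for every $\lambda > 0$, I would minimize the right-hand side by taking $\lambda = \varepsilon / \sum_{i=1}^{m} a_i^2$, producing the exponent $-\varepsilon^2 / (2\sum_{i=1}^{m} a_i^2)$. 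Repeating the argument for $(-X_k)$ and combining the two one-sided bounds by a union bound then gives the claimed inequality with its leading factor $2$.

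The step I expect to be the main obstacle is precisely the passage from the fixed endpoint $S_m$ to the supremum over all $k$: a plain Chernoff--Markov argument controls only $\Pr[S_m \ge \varepsilon]$, and recovering uniformity requires the submartingale structure of $(e^{\lambda S_k})$ and Doob's inequality. By comparison, Hoeffding's lemma and the optimization over $\lambda$ are routine; the one constant to track is the range $2a_k$ of each increment, whose square is what delivers the factor $2$ in the denominator of the final exponent.
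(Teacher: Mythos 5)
Your proposal is correct, but there is nothing in the paper to compare it against: the paper does not prove this theorem. It is explicitly restated as an adjusted version of a result from McDiarmid's concentration survey (his Theorem~3.13 / eq.~(41)), previously used in the same form by Doerr and Zheng and by Jedidia et al., and it serves purely as an imported tool for the genetic-drift analysis in Theorems~3 and~5. What you have written is essentially the standard proof of the maximal Azuma--Hoeffding inequality, i.e.\ the argument one finds in the cited source itself: Hoeffding's lemma for the conditional moment generating function (you correctly track that the increment range $2a_k$ yields $\E(e^{\lambda X_k}\mid \mathcal{F}_{k-1}) \le e^{\lambda^2 a_k^2/2}$ and hence the constant $2$ in the denominator of the final exponent), the tower rule to get $\E(e^{\lambda S_m}) \le \exp\bigl(\tfrac{\lambda^2}{2}\sum_{i=1}^m a_i^2\bigr)$, Doob's maximal inequality for the nonnegative submartingale $(e^{\lambda S_k})_k$ --- which, as you rightly identify, is exactly the ingredient that upgrades a plain Chernoff bound on $S_m$ to a bound uniform in $k$ --- the optimal choice $\lambda = \varepsilon/\sum_{i=1}^m a_i^2$, and a two-sided union bound giving the leading factor $2$. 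The only cosmetic remark: for $\varepsilon = 0$ your optimizer $\lambda$ leaves the admissible range $\lambda > 0$, but there the claimed bound is $2 \ge 1$ and holds trivially, so this is not a gap.
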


Next, we use Theorem~\ref{theorem:Hoeffding-Azuma-inequality} to demonstrate how long the frequencies of the \rcga at neutral places remain concentrated around the starting value of $p_{i,j}^{(0)}$ (which is usally $1/r$).

\begin{theorem}
\label{theorem:neutral-frequency-stay}    
Let $f$ be an $r$-valued fitness function with a neutral position $i\in\{1,\dots,n\}$. Consider the \rcga optimizing $f$ with population size $K$. Then, for $j\in \{0,\dots,r-1\}$ and $T\in\mathbb{N}$, we have
\[\Pr\left [\max_{t\in\{0,\dots,T\}} \left\vert p^{(t)}_{i,j} - p_{i,j}^{(0)}  \right\vert \geq \frac{1}{2r}\right ] \leq 2\exp{\left (-\frac{K^2}{8Tr^2}\right )}.\]
\end{theorem}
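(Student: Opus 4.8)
The plan is to realize the deviation $p^{(t)}_{i,j} - p^{(0)}_{i,j}$ as a partial sum of martingale differences and then apply the maximal concentration inequality of Theorem~\ref{theorem:Hoeffding-Azuma-inequality} directly. By Lemma~\ref{lemma:neutral-freq-martingale}, since $i$ is neutral, the sequence $(p^{(t)}_{i,j})_{t\in\mathbb{N}}$ is a martingale. I would set $X_k \coloneqq p^{(k)}_{i,j} - p^{(k-1)}_{i,j} = \Delta^{k-1}_{i,j}$ for $k\geq 1$; these increments form a martingale difference sequence, and the telescoping identity $\sum_{\ell=1}^{k} X_\ell = p^{(k)}_{i,j} - p^{(0)}_{i,j}$ holds. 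Consequently the event $\max_{t\in\{0,\dots,T\}} \lvert p^{(t)}_{i,j} - p^{(0)}_{i,j}\rvert \geq 1/(2r)$ is exactly the event $\max_{k=1,\dots,T}\lvert \sum_{\ell=1}^{k} X_\ell\rvert \geq 1/(2r)$ whose probability Theorem~\ref{theorem:Hoeffding-Azuma-inequality} bounds.

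Next I would bound the individual increments. Each update changes the frequency by $\tfrac{1}{K}(\indic{x_i=j}-\indic{y_i=j})$, a quantity lying in $\{-1/K,0,+1/K\}$, so $\lvert X_k\rvert \leq 1/K =: a_k$ for every $k$, independently of the sampling and selection at the remaining positions. Taking $m = T$ increments, this gives $\sum_{\ell=1}^{m} a_\ell^2 = T/K^2$. Finally I would instantiate Theorem~\ref{theorem:Hoeffding-Azuma-inequality} with $\varepsilon = 1/(2r)$, producing the upper bound $2\exp\!\bigl(-\tfrac{(1/(2r))^2}{2T/K^2}\bigr) = 2\exp\!\bigl(-\tfrac{K^2}{8Tr^2}\bigr)$, which is exactly the claimed estimate.

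As for the main obstacle: there is none of real substance, since the argument is a clean reduction to the supplied concentration tool. The only care needed is to verify the per-step increment bound $\lvert X_k\rvert \le 1/K$ and to carry the constants through so that the $8Tr^2$ in the exponent matches; the neutrality of position~$i$, via Lemma~\ref{lemma:neutral-freq-martingale}, is the single conceptual ingredient that guarantees the martingale structure regardless of what happens at the other $n-1$ positions.
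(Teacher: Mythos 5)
Your proposal is correct and follows essentially the same route as the paper's own proof: both invoke Lemma~\ref{lemma:neutral-freq-martingale} for the martingale property, form the difference sequence of one-step frequency changes bounded in absolute value by $1/K$, telescope to express $p^{(t)}_{i,j}-p^{(0)}_{i,j}$ as a partial sum, and apply Theorem~\ref{theorem:Hoeffding-Azuma-inequality} with $\varepsilon=1/(2r)$ and $\sum_k a_k^2 = T/K^2$ to obtain the stated bound $2\exp\left(-\frac{K^2}{8Tr^2}\right)$. The constants and the reduction of the maximum over $t\in\{0,\dots,T\}$ to $k\in\{1,\dots,T\}$ are handled correctly, so there is nothing to add.
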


\begin{proof}
We follow the same proof method as in the proof of \cite[Theorem 2]{Doerr2020ITEV}. For the \rcga, we have a sequence of frequencies $(p^{(t)}_{i,j})_{t\in\mathbb{N}}$. For all $j\in \{0,\dots, r-1\}$, we obtain
\begin{align*}
\P[p^{(t+1)}_{i,j} = p^{(t)}_{i,j}+1/K \mid p^{(1)}_{i,j},\dots, p^{(t)}_{i,j}] & = p^{(t)}_{i,j} (1 - p^{(t)}_{i,j})\\
\P[p^{(t+1)}_{i,j} = p^{(t)}_{i,j}-1/K \mid p^{(1)}_{i,j},\dots, p^{(t)}_{i,j}] & = p^{(t)}_{i,j} (1 - p^{(t)}_{i,j}) \\
\P[p^{(t+1)}_{i,j} = p^{(t)}_{i,j} \mid p^{(1)}_{i,j},\dots, p^{(t)}_{i,j}] & = 1 - 2p^{(t)}_{i,j} (1 - p^{(t)}_{i,j})
\end{align*}
From  Lemma~\ref{lemma:neutral-freq-martingale}, we have $\E(p^{(t+1)}_{i,j} \mid p^{(0)}_{i,j},\dots, p^{(t)}_{i,j})=p^{(t)}_{i,j}$. Consider the martingale difference sequence $R_{t} \coloneqq p^{(t)}_{i,j} - p^{(t-1)}_{i,j}$ where $t \geq 1$ and $p^{(0)}_{i,j} = 1/r$, which satisfies $\lvert R_{t} \rvert \leq 1/K$. Further, by expanding it
\begin{align*}
p_{i,j}^{(k)} & = p_{i,j}^{(0)} + (p_{i,j}^{(1)}-p_{i,j}^{(0)}) + (p_{i,j}^{(2)}-p_{i,j}^{(1)}) + \dots + \\ & \indent (p_{i,j}^{(k)}-p_{i,j}^{(k-1)}) \\
& = p_{i,j}^{(0)} + R_{1} + \dots + R_{k}\\
& = p_{i,j}^{(0)} + \sum_{\ell=1}^{k} R_{\ell}\\
\text{so, } p_{i,j}^{(k)} - \frac{1}{r} & = \sum_{\ell=1}^{k} R_{\ell}
\end{align*}

According to \textit{Hoeffding-Azuma inequality} (Theorem~\ref{theorem:Hoeffding-Azuma-inequality}), we have 
\begin{align*}
\Pr\left [\max_{k=1,\dots,T} \left\vert p^{(k)}_{i,j} - \frac{1}{r}  \right\vert \geq \frac{1}{2r} \right] & \\
\Pr\left [\max_{k=1,\dots,T} \left\vert \sum_{\ell=1}^{k} R_{\ell} \right\vert \geq \frac{1}{2r} \right] & \leq 2\exp{\left (-\frac{K^2}{8Tr^2}\right )}.\quad\qed
\end{align*}
\end{proof}

In many situations, for a given fitness function the positions are not neutral. However, we demonstrate that the results on neutral positions apply to positions in which one value is better than all other values. This is known as \textit{weak preference}~\cite{Doerr2020ITEV}. Formally, an $r$-valued fitness function $f$ has a weak preference for a value $j\in \{0,\dots,r-1\}$ at a position $i\in \{1,\dots,n\}$ if and only if, for all $x_1,\dots,x_n \in \{0,\dots,r-1\}$, it holds that
\[ f(x_1,\dots,x_{i-1},x_{i},x_{i+1},\dots,x_n) \leq f(x_1,\dots,x_{i-1},j,x_{i+1},\dots,x_n).\]

We apply Lemma 3 by Doerr and Zheng~\cite[Lemma 3]{Doerr2020ITEV} according to the \rcga.

\begin{theorem}
\label{theorem:weak-preference-neutral}
Consider two $r$-valued fitness functions $f$ and $g$ to optimize using the \rcga, such that the first position of $f$ weakly prefers $r-1$ and the first position of $g$ is neutral.

Let $p$ and $q$ be the corresponding frequency matrices of $f$ and $g$, both defined by the \rcga. Then, for all $t\in\mathbb{N}$, it holds that $p^{(t)}_{1,r-1}\succeq q^{(t)}_{1,r-1}$.
\end{theorem}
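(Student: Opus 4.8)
The plan is to prove the statement directly by induction on $t$, maintaining the invariant that the law of $p^{(t)}_{1,r-1}$ dominates the law of $q^{(t)}_{1,r-1}$. The base case is immediate, since both frequencies are initialized to $1/r$ and hence have identical law. For the inductive step I would isolate the one-step transition kernel of the marginal $p^{(t)}_{1,r-1}$: although the swap decision for $f$ may depend on position~$1$, I will show that the law of $p^{(t+1)}_{1,r-1}$ given $p^{(t)}_{1,r-1}=v$ stochastically dominates the neutral update described in Lemma~\ref{lemma:neutral-freq-martingale} evaluated at $v$, uniformly over the remaining coordinates of the state. The inductive step then follows from a standard composition principle for stochastically monotone kernels, mirroring the technique behind \cite[Lemma~3]{Doerr2020ITEV}.

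Two ingredients drive the induction. First, \emph{single-step dominance at equal frequency}: if both chains currently sit at the same value $v$, the $f$-update dominates the $g$-update. Writing ``up'' for the event that $p_{1,r-1}$ increases by $1/K$ and ``down'' for a decrease by $1/K$, a change requires that exactly one of $x_1,y_1$ equals $r-1$, which happens with probability $2v(1-v)$ in both chains. For the neutral $g$ the winner is chosen independently of position~$1$, so by symmetry $\Pr[\text{up}] = \Pr[\text{down}] = v(1-v)$. For $f$ I would condition on the sampled values $\xi,\eta$ at positions $\geq 2$ and on the unordered pair $\{x_1,y_1\}=\{r-1,c\}$; since $x$ and $y$ are i.i.d., the two ways of assigning $r-1$ to $x$ or to $y$ are equally likely, so it suffices to verify $\Pr[\text{up}] \geq \Pr[\text{down}]$ conditionally. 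This reduces to showing that at least one of $f(r-1,\xi)\geq f(c,\eta)$ and $f(r-1,\eta)\geq f(c,\xi)$ holds; if both failed, chaining the two strict inequalities with the two weak-preference inequalities $f(c,\eta)\leq f(r-1,\eta)$ and $f(c,\xi)\leq f(r-1,\xi)$ would give $f(r-1,\xi) < f(r-1,\xi)$, a contradiction. Hence the $f$-kernel shifts mass from ``down'' to ``up'' and dominates the neutral kernel.

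Second, \emph{stochastic monotonicity of the neutral kernel}: for $v \geq v'$ the neutral update from $v$ dominates that from $v'$. By transitivity it is enough to check adjacent grid values $v'$ and $v'+1/K$, a short finite comparison of the three-point distributions using only $v(1-v)\leq 1/4$. With these two facts the inductive step closes: writing $P$ for the law of $p^{(t)}_{1,r-1}$ and $Q$ for that of $q^{(t)}_{1,r-1}$, applying the $f$-kernel to $P$ dominates applying the neutral kernel to $P$ (first ingredient, integrated over the remaining coordinates), which in turn dominates applying the neutral kernel to $Q$ (second ingredient together with $P \succeq Q$), and transitivity of $\succeq$ yields $p^{(t+1)}_{1,r-1} \succeq q^{(t+1)}_{1,r-1}$.

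The main obstacle is the first ingredient: because $x$ and $y$ may differ at many positions besides the first, the weak preference --- which only compares individuals agreeing off position~$1$ --- does not directly tell us which sample wins. The resolution is the symmetry of i.i.d.\ sampling, which makes the two placements of $r-1$ equiprobable, combined with the short contradiction argument above; together they convert the coordinatewise weak preference into the required inequality $\Pr[\text{up}] \geq \Pr[\text{down}]$ regardless of the off-position values. A minor secondary point to keep clean is that the $f$-kernel genuinely depends on the full frequency matrix, so I must state the single-step domination as holding uniformly over every realization of the remaining coordinates before integrating --- which is exactly what the conditioning argument delivers.
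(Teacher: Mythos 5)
Your proposal is correct, and although it shares the paper's overall skeleton---induction on $t$ with stochastic domination propagated through one step, in the spirit of \cite[Lemma~3]{Doerr2020ITEV}---your one-step argument takes a genuinely different route, and in fact a more complete one. The paper couples the two chains via the well-behaved frequency assumption so that either $p^{(t)}_{1,r-1}=q^{(t)}_{1,r-1}$ or $p^{(t)}_{1,r-1}=q^{(t)}_{1,r-1}+1/K$, explicitly sets the equal case aside (``we only regard the latter, more interesting case''), and verifies the CDF inequality in three ranges of $\lambda$; the middle range only needs the slack $p(1-p)\le 1/4$ versus $1-q(1-q)\ge 3/4$, so the weak-preference hypothesis is never visibly invoked there. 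You instead factor through the neutral kernel: (a) at a common value $v$ the $f$-update dominates the neutral update, and (b) the neutral kernel is stochastically monotone, so domination of the time-$t$ laws is preserved; composing (a), (b) and transitivity closes the induction. Ingredient (a) is exactly where weak preference does real work, via your exchangeability-plus-contradiction argument: conditioned on the unordered pair $\{r-1,c\}$ at position~1 and the off-position samples $\xi,\eta$, if both ``up'' events failed, the two swap inequalities chained with the two weak-preference inequalities would give $f(r-1,\xi)<f(r-1,\xi)$ (modulo routine care with how ties are broken in the swap rule, which does not affect the contradiction). Notably, this equal-value case is the one the paper's written proof skips, yet it is the only case that occurs at $t=0$, where both matrices are uniform, and it is the case in which domination can actually fail without weak preference. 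A further benefit of your version is that ingredient (a) holds uniformly over every realization of the remaining coordinates, so you do not need the paper's side assumption that the two frequency matrices agree off position~1---an assumption that is not obviously maintained under the induction anyway, since $f$ and $g$ may drive the other positions differently. In short, the paper's route is shorter on the page; yours isolates the two probabilistic facts the theorem actually rests on and covers the configuration where the hypothesis is indispensable.
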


\begin{proof}
We first demonstrate the claim for the first iteration, and then argue that an easy induction proves it for any iteration $t$. In the frequency matrices, we assume that $p^{(0)}_{i,r-1}= q^{(0)}_{i,r-1}$ for $i\in \{2,\dots,n\}$ and $p^{(0)}_{1,r-1}\geq q^{(0)}_{1,r-1}$. Assume that in the case of \rcga, the well-behaved frequency assumption holds. Note that, with the well-behaved frequency assumption, we have $p^{(0)}_{1,r-1} \geq q^{(0)}_{1,r-1}$ implies $p^{(0)}_{1,r-1} = q^{(0)}_{1,r-1}$ or $p^{(0)}_{1,r-1} = q^{(0)}_{1,r-1} + 1/K$. We only regard  the latter, more interesting case. We show $p^{(1)}_{1,r-1}\succeq q^{(1)}_{1,r-1}$ using the definition of domination, that is, that $\Pr[p^{(1)}_{1,r-1}\leq \lambda] \leq \Pr[q^{(1)}_{1,r-1}\leq \lambda]$ holds for all $\lambda\in\mathbb{R}$. Then, we have following three cases:
\begin{enumerate}
    \item At first, assume $\lambda < q^{(0)}_{1,r-1}$. Since $p^{(0)}_{1,r-1} - 1/K \geq q^{(0)}_{1,r-1} > \lambda$ from our assumption, we have $\Pr[p^{(1)}_{1,r-1}\leq\lambda] = 0 \leq \Pr[q^{(1)}_{1,r-1}\leq \lambda]$.
    \item Assume $q^{(0)}_{1,r-1}\leq \lambda < p^{(0)}_{1,r-1}$. In this case, $\Pr[p^{(1)}_{1,r-1}\leq \lambda] \leq p^{(0)}_{1,r-1}(1-p^{(0)}_{1,r-1})\leq 1/(2r)$ and $\Pr[q^{(1)}_{1,r-1}\leq \lambda] = 1 - q^{(0)}_{1,r-1}(1-q^{(0)}_{1,r-1})\geq 1 - 1/(2r)$, which gives the claim.
    \item Assume $\lambda\geq p^{(0)}_{1,r-1}$. Since $q^{(0)}_{1,r-1} + 1/K \leq p^{(0)}_{1,r-1}\leq \lambda$ from our assumption, we have $\Pr[q^{(1)}_{1,r-1}\leq \lambda] = 1 \geq  \Pr[p^{(1)}_{1,r-1}\leq \lambda]$.
\end{enumerate}
Hence, we have $p^{(1)}_{1,r-1}\succeq q^{(1)}_{1,r-1}$.

In order to extend the proof to arbitrary generation $t$, note that if we have $p^{(t-1)}_{1,r-1}\succeq q^{(t-1)}_{1,r-1}$, then we can find a coupling of the two probability spaces (see, \cite[Theorem 12]{Doerr2019TCS}) describing the states of the algorithm at the start iteration $t-1$ in such a way that for any point $\omega$ in the coupling probability space we have $p^{(t-1)}_{1,r-1}\geq q^{(t-1)}_{1,r-1}$. Conditional on the $\omega$ and by using the above argument for one iteration, we obtain $p^{(t)}_{1,r-1}\succeq q^{(t)}_{1,r-1}$. This implies that we also have $p^{(t)}_{1,r-1}\succeq q^{(t)}_{1,r-1}$ without conditioning on an $\omega$. \qed
\end{proof}

Applying Theorem~\ref{theorem:weak-preference-neutral} allows us to extend Theorem~\ref{theorem:neutral-frequency-stay} to positions with weak preference. Since 
their expected value may raise over time (formally, they are a submartingale), we 
state the deviation with respect to an arbitrary starting value 

\begin{theorem}
\label{theorem:frequency-weak-preference}  
Let $f$ be an $r$-valued fitness function with a weak preference for $r-1$ at position $i\in \{1,\dots,n\}$. Consider the \rcga optimizing $f$ with parameter~$K$. Let $T\in \mathbb{N}$, then we have 
\[\Pr\left [\min_{t\in\{0,\dots,T\}} p^{(t)}_{i,r-1} \leq p_{i,r-1}^{(0)}-\frac{1}{2r}\right ] \leq \mathord{2\exp}\mathord{{\left (-\frac{K^2}{8Tr^2}\right )}}.\]
\end{theorem}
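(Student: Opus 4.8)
The plan is to transfer the neutral-position concentration bound of Theorem~\ref{theorem:neutral-frequency-stay} to the weak-preference position by a stochastic-domination argument built on Theorem~\ref{theorem:weak-preference-neutral}. Concretely, I would introduce an auxiliary $r$-valued fitness function $g$ that is \emph{neutral} at position~$i$ (and otherwise arbitrary), run the \rcga on $g$ with the same parameter~$K$ and the same initialization, and write $q^{(t)}_{i,r-1}$ for the resulting frequency, so that $q^{(0)}_{i,r-1}=p^{(0)}_{i,r-1}$. Since $f$ weakly prefers $r-1$ at position~$i$ while $g$ is neutral there, Theorem~\ref{theorem:weak-preference-neutral} gives the domination $p^{(t)}_{i,r-1}\succeq q^{(t)}_{i,r-1}$. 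The intuition is that weak preference can only push the frequency upward (it is a submartingale), so a downward deviation of $p_{i,r-1}$ is no more likely than the corresponding deviation of the neutral, purely random-walking frequency $q_{i,r-1}$, which Theorem~\ref{theorem:neutral-frequency-stay} already controls.

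The execution proceeds in a few steps. First, I would upgrade the per-iteration dominance of Theorem~\ref{theorem:weak-preference-neutral} to a single coupling of the two runs under which $p^{(t)}_{i,r-1}\ge q^{(t)}_{i,r-1}$ holds simultaneously for all $t\in\{0,\dots,T\}$ on every sample path. Second, on this coupled space pathwise domination immediately yields $\min_{t\in\{0,\dots,T\}} p^{(t)}_{i,r-1}\ge \min_{t\in\{0,\dots,T\}} q^{(t)}_{i,r-1}$, and therefore, with $\theta\coloneqq p^{(0)}_{i,r-1}-\tfrac{1}{2r}$,
\[
\Pr\!\left[\min_{t\in\{0,\dots,T\}} p^{(t)}_{i,r-1}\le \theta\right]
\;\le\;
\Pr\!\left[\min_{t\in\{0,\dots,T\}} q^{(t)}_{i,r-1}\le \theta\right].
\]
Third, I would bound the neutral right-hand side by noting the inclusion of events
\[
\left\{\min_{t\in\{0,\dots,T\}} q^{(t)}_{i,r-1}\le p^{(0)}_{i,r-1}-\tfrac{1}{2r}\right\}
\subseteq
\left\{\max_{t\in\{0,\dots,T\}}\bigl\lvert q^{(t)}_{i,r-1}-q^{(0)}_{i,r-1}\bigr\rvert\ge \tfrac{1}{2r}\right\},
\]
and then applying Theorem~\ref{theorem:neutral-frequency-stay} directly to $q$ to obtain the claimed $2\exp\!\bigl(-K^2/(8Tr^2)\bigr)$.

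The main obstacle is the first step: Theorem~\ref{theorem:weak-preference-neutral} is stated as \emph{marginal} stochastic dominance at each fixed time, and marginal dominance at every $t$ does not by itself imply dominance of the running minimum. I would therefore argue that the inductive, step-by-step coupling used in the proof of Theorem~\ref{theorem:weak-preference-neutral} is genuinely pathwise and order-preserving: if $p^{(t-1)}_{1,r-1}\ge q^{(t-1)}_{1,r-1}$ holds pointwise on the coupled space, the one-iteration construction in that proof can be applied conditionally on each $\omega$ so that $p^{(t)}_{1,r-1}\ge q^{(t)}_{1,r-1}$ is maintained, and chaining these couplings produces one probability space on which the ordering survives along the entire trajectory. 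Establishing this monotone coupling carefully is where the care lies; once it is in place, the remaining reduction to the neutral case and the invocation of Theorem~\ref{theorem:neutral-frequency-stay} are routine.
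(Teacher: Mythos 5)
Your proposal follows essentially the same route as the paper's proof: introduce an auxiliary fitness function $g$ that is neutral at position~$i$, invoke Theorem~\ref{theorem:weak-preference-neutral} to get $p^{(t)}_{i,r-1}\succeq q^{(t)}_{i,r-1}$, apply Theorem~\ref{theorem:neutral-frequency-stay} to $g$, and transfer the tail bound back to $f$. The additional care you take in upgrading the per-time marginal dominance to a pathwise, chained coupling---so that the running minimum itself is dominated---is precisely the step the paper leaves implicit in the phrase ``using the stochastic domination yields the tail bound''; the inductive coupling constructed inside the paper's proof of Theorem~\ref{theorem:weak-preference-neutral} is what supplies this, and your version simply spells that out rigorously.
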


\begin{proof}
Let $g$ be an $r$-valued fitness function with neutral position $i\in\{1,\dots,n\}$ and frequency matrix $q$. Consider the \rcga optimizing $g$. According to Theorem~\ref{theorem:weak-preference-neutral}, it follows for all $t\in\mathbb{N}$ that $p^{(t)}_{i,r-1}$ stochastically dominates $q^{(t)}_{i,r-1}$. By applying Theorem~\ref{theorem:neutral-frequency-stay} to fitness function $g$ for position $i$, we have
\[\Pr\left [\min_{t\in\{0,\dots,T\}} q^{(t)}_{i,r-1} \leq p_{i,r-1}^{(0)}-\frac{1}{2r}\right ] \leq \mathord{2\exp}\mathord{\left (-\frac{K^2}{8Tr^2}\right )}\]
Using the stochastic domination yields the tail bound for $f$. $\qedherenew$
\end{proof}

\section{Runtime Analysis}
\label{section:runtime}

This section evaluates the runtime of the \rcga (Algorithm~\ref{algorithm:r-cGA-rOneMax}) on \ronemax. In the preliminaries, we briefly presented the two variants of $r$-valued \text{OneMax} -- one is \ronemax and another is \gonemax. There is a single local maximum for both functions at the all-($r-1$)s string, which is also their global optimum.

With high probability, we bound the runtime of the \rcga on \ronemax under the assumption of low genetic drift using drift analysis and then apply Markov's inequality on the time bound. Further we consider the probability of no frequency dropping below $1/(2r)$ at the beginning and, over time, below $k/(2r)$ for a growing $k$. We prove the following theorem in a similar fashion as Sudholt and Witt \cite[Theorem 2]{sudholt2019choice} for binary decision variables; however, additional care has to be taken to control genetic drift from the starting value~$1/r$ of a frequency. In the following theorem (Theorem~\ref{theorem:rcgacomplexity}), we formulate our main results related to runtime. 
% After comparing our bound to the bound for binary cGA, we can mention that our bound probably is not tight.

\begin{theorem}
\label{theorem:rcgacomplexity}
With high probability, the runtime of the \rcga on \ronemax with $K\geq c r^2 \sqrt{n} (\log r + \log^2 n) $ for a sufficiently large $c > 0$ and $K,r= poly(n)$ is $\bigo(K\sqrt{n}\log r\log n)$. For $K = c r^2 \sqrt{n} (\log r + \log n)$, the runtime bound is $\bigo(r^2 n \log^2 r \log^3 n )$.

% With high probability, the runtime of the \rcga on \ronemax with $K\geq c r^2 \sqrt{n} \log r \log n $ for a sufficiently large $c > 0$ and $K= poly(n)$ is $\bigo(K\sqrt{n}\log r)$. For $K = c r^2 \sqrt{n} \log r \log n $ the runtime bound is $\bigo(r^2 n \log^2 r \log n )$.
\end{theorem}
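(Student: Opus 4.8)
The plan is to track, for every position $i$, the frequency $p^{(t)}_{i,r-1}$ of the optimal value $r-1$, since the string all-$(r-1)$s is the unique optimum of \ronemax and is sampled in one iteration with probability $\prod_{i=1}^{n} p^{(t)}_{i,r-1}$; it therefore suffices to drive all these frequencies to the upper border $1$ and sample once more. Because \ronemax weakly prefers $r-1$ at every position, Theorem~\ref{theorem:frequency-weak-preference} makes each such frequency a submartingale whose downward deviations are controlled, and the superposition~\eqref{equation:calDelta} shows that its expected one-step increase equals $\Pr[D_i\in\{-1,0\}]\cdot 2p^{(t)}_{i,r-1}(1-p^{(t)}_{i,r-1})/K$. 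The whole argument rests on turning this expected increase into a drift statement and iterating it for as long as genetic drift is kept under control.

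First I would establish the engine of progress, namely an anti-concentration bound of the form $\Pr[D_i\in\{-1,0\}]=\Omega(1/\sqrt n)$. Here $D_i=\sum_{j\neq i}(\indic{x_j=r-1}-\indic{y_j=r-1})$ is a sum of independent symmetric $\{-1,0,+1\}$ variables with variance $2\sum_{j\neq i}p^{(t)}_{j,r-1}(1-p^{(t)}_{j,r-1})\le n/2$, so a local-limit/anti-concentration estimate yields $\Pr[D_i=0]=\Omega(1/\sqrt n)$ uniformly, requiring no assumption on the individual frequencies (if many of them are near $0$ or $1$ the variance only shrinks, which helps). Combined with the previous formula this gives, whenever $p^{(t)}_{i,r-1}=\Omega(1/r)$, a one-step drift of order $p^{(t)}_{i,r-1}(1-p^{(t)}_{i,r-1})/(K\sqrt n)$.

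Next I would run a two-regime multiplicative-drift analysis, assuming throughout that genetic drift has not pushed any frequency too low. While $p^{(t)}_{i,r-1}\le 1/2$ one has $2p(1-p)\ge p$, so the drift is $\Omega(p^{(t)}_{i,r-1}/(K\sqrt n))$ and the frequency grows multiplicatively; a standard multiplicative-growth (doubling) argument then shows that escaping the starting value $1/r$ up to a constant takes $\bigo(K\sqrt n\log r)$ steps, which is where the $\log r$ enters. Once $p^{(t)}_{i,r-1}\ge 1/2$, the deficiency $1-p^{(t)}_{i,r-1}$ contracts multiplicatively at rate $\Omega(1/(K\sqrt n))$, and applying the multiplicative drift theorem to the aggregate deficiency $\sum_i(1-p^{(t)}_{i,r-1})$, whose ratio of largest to smallest positive value is $\bigo(nK)=\mathrm{poly}(n)$, drives all frequencies to the border in a further $\bigo(K\sqrt n\log n)$ expected steps. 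This bounds the expected time to sample the optimum by $\bigo(K\sqrt n(\log r+\log n))$; Markov's inequality converts this into a constant success probability over a window of that length, and treating $\bigo(\log n)$ consecutive windows as independent Bernoulli trials with success probability at least $1/2$ (progress is monotone because frequencies do not fall back) amplifies this to the claimed high probability, which is where the product $\log r\log n$ in the bound $\bigo(K\sqrt n\log r\log n)$ comes from.

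The step I expect to be the main obstacle, and the one the authors flag as needing extra care, is the genetic-drift bookkeeping underlying the standing assumption above. Because frequencies start at the small value $1/r$, the drift term $p(1-p)/K$ collapses if a frequency ever drifts down towards $0$, so the entire multiplicative argument is valid only while every frequency stays bounded away from $0$. I would control this by a multi-level application of Theorem~\ref{theorem:frequency-weak-preference}: over the horizon $T=\bigo(K\sqrt n\log r\log n)$, partition time into segments and re-anchor the submartingale at the start of each segment, showing by a union bound that no frequency ever drops by more than $1/(2r)$ below its current level, so that the guaranteed lower bound $k/(2r)$ grows together with the frequencies instead of staying at the initial $1/(2r)$. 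Forcing the resulting union-bound failure probability, of order $\mathrm{poly}(n)\cdot\exp(-K^2/(8Tr^2))$ over all positions and segments, to be $n^{-\Omega(1)}$ is exactly what dictates the requirement $K=\Omega(r^2\sqrt n(\log r+\log^2 n))$; balancing this constraint against the runtime $T$ is the delicate quantitative heart of the proof and closes the argument.
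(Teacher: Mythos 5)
Your high-level plan coincides with the paper's in its key ingredients: the same superposition formula for the per-step change, an anti-concentration bound on $D_i$, and genetic-drift control via Theorem~\ref{theorem:frequency-weak-preference} re-anchored at the start of each phase with a union bound over positions and phases, which is indeed what forces $K=\Omega(r^2\sqrt{n}(\log r+\log^2 n))$. Where you genuinely differ is the progress measure: the paper never runs a multiplicative analysis on single frequencies. It tracks the aggregate potential $\varphi_t=\sum_{i=1}^n(1-p^{(t)}_{i,r-1})$, splits the run into $r/2$ phases in which $\varphi$ drops by $n/r$, converts ``average frequency at least $k/r$'' into the pointwise bound $p^{(t)}_{i,r-1}\ge k/(2r)$ via the symmetry of \ronemax together with the genetic-drift bound, and then applies additive drift with overshooting per phase (Lemma~\ref{lemma:drift-of-phi}), followed by a variable-drift argument once $\varphi\le n/2$. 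Note also that the paper's Lemma~\ref{lemma:positiveupdate} is variance-sensitive, giving $\P[D_i=0]=\Omega\bigl(1/(\sqrt{\varphi_t}+1)\bigr)$; your uniform $\Omega(1/\sqrt{n})$ bound is correct but strictly weaker, and it is one source of the slack described next.

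There are two concrete gaps. First, your justification ``progress is monotone because frequencies do not fall back'' is false: without borders, each individual frequency performs fair rw-steps whenever $\vert D_i\vert\ge 2$ or $D_i=1$ and can fall back --- that is exactly the genetic drift you control in your last paragraph. Only the aggregate $\sum_i p^{(t)}_{i,r-1}$ is non-decreasing, because the winning offspring has at least as many $(r-1)$-entries as the loser. This error matters twice: (i) a ``standard multiplicative-growth (doubling) argument'' does not exist as a black box for a non-monotone bounded-step process --- positive multiplicative drift alone does not yield the hitting time, and you would need either a genuine up-drift theorem or, as the paper does, level-wise additive drift using the level-dependent lower bound $k/(2r)$; (ii) the window-restart amplification needs the drift preconditions to be re-established at the start of each window, which must come from the genetic-drift bound and the monotone aggregate, not from per-frequency monotonicity. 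Second, your assembly proves a weaker bound than claimed: taking total expected time $\bigo(K\sqrt{n}(\log r+\log n))$, applying Markov's inequality, and stacking $\Theta(\log n)$ windows gives $\bigo(K\sqrt{n}(\log r+\log n)\log n)$, which for constant $r$ is $\bigo(K\sqrt{n}\log^2 n)$ --- a $\log n$ factor above the claimed $\bigo(K\sqrt{n}\log r\log n)$; moreover, the time until \emph{all} $n$ frequencies have reached $1/2$ is a maximum of $n$ hitting times, which cannot be bounded by per-frequency expectations alone. Both issues are repairable within your scheme by proving high-probability bounds per phase and per frequency (Markov plus restart at the phase level, or the tail version of multiplicative drift for the endgame) and then union-bounding, which is how the paper arrives at $\bigo(K\sqrt{n}\log r\log n)$; but as written the proposal does not establish the stated theorem.
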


The proof of Theorem~\ref{theorem:rcgacomplexity} requires the following lemmas. According to the lemmas, the drift grows with an update strength of $1/K$. Furthermore, a high value of $1/K$ also increases genetic drift. We prove the next lemma in a similar way as \cite[Lemma 1]{Neumann2010}.

\begin{lemma}
\label{lemma:positiveupdate}
Let $p^{(t)}_{i,j}$ denote the frequency vectors of the current iteration of \rcga on \ronemax where $(i,j)\in \{1,\dots,n\}\times \{0,\dots,r-1\}$. For a sufficiently large $n$, we get 
\[\P[D_{i}=0] \geq \frac{4}{9\left(2\sqrt{3\left( \sum_{j\neq i} p^{(t)}_{j,r-1} (1 - p^{(t)}_{j,r-1})\right)}+1\right)}\]
where $D_{i} := \left( \sum_{j\neq i} \indic{x_{j}=r-1} - \sum_{j\neq i} \indic{y_{j}=r-1}\right)$ and $x,y\in \{0,\dots,r-1\}^n$.
\end{lemma}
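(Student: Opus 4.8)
The plan is to recognize $D_i$ as the difference of two independent, identically distributed Poisson-binomial variables and to lower-bound the resulting collision probability by an anti-concentration argument. Concretely, write $X \coloneqq \sum_{j\neq i}\indic{x_j = r-1}$ and $Y \coloneqq \sum_{j\neq i}\indic{y_j = r-1}$, so that $D_i = X - Y$. Since $x$ and $y$ are sampled independently from the same frequency matrix, $X$ and $Y$ are independent and identically distributed, each a sum of independent Bernoulli variables with success probabilities $p^{(t)}_{j,r-1}$. In particular $\operatorname{Var}(X) = \sum_{j\neq i} p^{(t)}_{j,r-1}(1-p^{(t)}_{j,r-1}) \eqqcolon \sigma^2$, exactly the quantity under the square root in the claim.

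First I would rewrite the target as a collision probability, using independence and identical distribution:
\[
\Pr[D_i = 0] = \Pr[X = Y] = \sum_{k} \Pr[X=k]\Pr[Y=k] = \sum_k \Pr[X=k]^2 .
\]
The whole task then reduces to lower-bounding $\sum_k \Pr[X=k]^2$ by something of order $1/\sigma$. The key step is to avoid the naive estimate $\sum_k\Pr[X=k]^2 \ge (\max_k \Pr[X=k])^2$, which is too weak (it yields only order $1/\sigma^2$), and instead to apply Cauchy--Schwarz over the entire bulk of the distribution. By Chebyshev's inequality with deviation $\sqrt3\,\sigma$ we have $\Pr[\,|X - \E(X)| \le \sqrt3\,\sigma\,] \ge 1 - 1/3 = 2/3$. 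Let $I$ be the set of integers in the interval $[\E(X)-\sqrt3\,\sigma,\ \E(X)+\sqrt3\,\sigma]$; since this interval has length $2\sqrt3\,\sigma$ it contains at most $2\sqrt3\,\sigma + 1$ integers, so $|I| \le 2\sqrt{3\sigma^2}+1$. As $\sum_{k\in I}\Pr[X=k] \ge 2/3$, Cauchy--Schwarz gives
\[
\Bigl(\tfrac23\Bigr)^2 \le \Bigl(\sum_{k\in I}\Pr[X=k]\Bigr)^2 \le |I|\sum_{k\in I}\Pr[X=k]^2 \le |I|\sum_k \Pr[X=k]^2 .
\]
Rearranging and substituting the bound on $|I|$ yields $\sum_k\Pr[X=k]^2 \ge \tfrac{4/9}{\,2\sqrt{3\sigma^2}+1\,}$, which is precisely the claimed inequality.

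I expect the main obstacle to be exactly this anti-concentration step: recognizing that one must sum the squared point masses over the full $\Theta(\sigma)$-wide bulk rather than only at the mode to recover the correct $1/\sigma$ order, and then pairing the Chebyshev tail estimate (threshold $t=\sqrt3$, which produces the $2/3$ and hence the $4/9$) with Cauchy--Schwarz so that ``$2/3$ of the mass sits on at most $2\sqrt3\sigma+1$ integers'' becomes a lower bound on the collision probability. By contrast, I expect the role of ``sufficiently large $n$'' to be minor: it absorbs degenerate low-variance edge cases (e.g.\ small $\sigma$, where one must verify the integer count $|I|\le 2\sqrt{3\sigma^2}+1$ and the treatment of the interval endpoints) and any lower-order slack, while the core inequality holds verbatim whenever $\sigma>0$.
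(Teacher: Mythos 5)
Your proposal is correct and follows essentially the same route as the paper's proof: Chebyshev's inequality with deviation $\sqrt{3}\sigma$ to trap the mass in an interval containing at most $2\sqrt{3}\sigma+1$ integers, followed by a convexity/Cauchy--Schwarz step to lower-bound the collision probability, yielding the identical constant $\tfrac{4}{9}$. The only cosmetic difference is that you apply Cauchy--Schwarz directly to $\sum_{k\in I}\Pr[X=k]^2$, whereas the paper conditions on the event $X,Y\in I$ (probability at least $(2/3)^2$) and then bounds the conditional collision probability by $1/\lvert I\rvert$; the two bookkeepings are equivalent.
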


\begin{proof}
Let $x\in \{0,\dots,r-1\}^n $ and $y\in \{0,\dots,r-1\}^n$ be the next two \rcga solutions and $D_{i} \coloneqq X - Y$ where $X\coloneqq \sum_{j\neq i} \indic{x_j= r-1}$ and $Y\coloneqq \sum_{j\neq i} \indic{y_j= r-1}$, be the difference in the number of occurrence of $(r-1)$-entries for all other positions. Next, we estimate $\P[D_{i}=0]$.

% We estimate $\P[D_{i}=0]$. 
The variance equals $\Var(\indic{x_{j}=r-1}) =$ $\E\left( (\indic{x_{j}=r-1} - p^{(t)}_{j,r-1})^2\right) = p^{(t)}_{j,r-1}(1-p^{(t)}_{j,r-1})$ and due to independence $\Var (X)=\sum_{j\neq i} \Var(\indic{x_{j}=r-1}) = \sum_{j\neq i} p^{(t)}_{j,r-1}(1-p^{(t)}_{j,r-1})$.

Let ${\sigma}^2 = \Var(X)$, then by Chebyshev's inequality
\begin{center}
\[\P\left[ \left\vert X - \sum_{j\neq i} p^{(t)}_{i,r-1} \right\vert \geq \sqrt{3}\sigma \right] \leq \frac{1}{3}.\]
\end{center}
Let $I := \left[ \sum_{j\neq i} p^{(t)}_{i,r-1} - \sqrt{3}\sigma, \sum_{j\neq i} p^{(t)}_{i,r-1} + \sqrt{3}\sigma  \right]$ and note that there are a maximum of $2\sqrt{3}\sigma +1$ integers in $I$. Assume that $X\in I$ and $Y\in I$, which occurs at least $(1-1/3)^2 = 4/9$ with probability. Note that, $X=Y$ is identical to $D_{i} = 0$ and 

\begin{align*}
\P&[X=Y \mid X,Y \in I]  \\
 & = \sum_{z\in I} \P[X=z \mid X\in I] \cdot \P[Y=z\mid Y\in I]  \\
 & = \sum_{z\in I} \P[X=z \mid X\in I]^2 \\
 & \geq \sum_{z\in I} {\left( \frac{1}{\vert I \vert}\right)}^2 = \frac{1}{\vert I \vert} \geq \frac{1}{2\sqrt{3}\sigma +1}
\end{align*} 
where the first inequality holds due to the square function's convexity (shifting probability mass to the average value $1/\vert I\vert$ minimizes the sum of squares). Therefore, the unconditional probability $\P[X=Y]=\P[D_{i}=0]$ is at least
\begin{align*}
\frac{4}{9}\cdot \frac{1}{2\sqrt{3}\sigma +1}
\end{align*}
Plugging in $\sigma$, it holds that
\[\P[D_{i}=0] \geq \frac{4}{9\left(2\sqrt{3\left( \sum_{j\neq i} p^{(t)}_{j,r-1} (1 - p^{(t)}_{j,r-1})\right)}+1\right)}. \qedherenew \]  
\end{proof}

\begin{lemma}
\label{lemma:sdrift}
 If $\frac{1}{K} \leq p^{(t)}_{i,r-1} \leq 1 - \frac{1}{K}$, then   
 \[\E(\Delta_{i,r-1}\mid p^{(t)}_{i,r-1}) \geq \frac{8(p^{(t)}_{i,r-1} (1 - p^{(t)}_{i,r-1}))}{9K \left(2\sqrt{3\left( \sum_{j\neq i} p^{(t)}_{j,r-1} (1 - p^{(t)}_{j,r-1})\right)}+1\right)}\]
\end{lemma}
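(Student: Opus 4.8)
The plan is to exploit the superposition decomposition~\eqref{equation:calDelta}, namely $\Delta_{i,r-1} = F_{i}\cdot\indic{R_i} + B_{i}\cdot\indic{\overline{R_i}}$, and to turn it into a lower bound on the conditional expected drift. First I would observe that the hypothesis $1/K \leq p^{(t)}_{i,r-1} \leq 1 - 1/K$ guarantees that an update of $\pm 1/K$ keeps the frequency inside $[0,1]$, so that the distributions of $F_i$ and $B_i$ derived in the framework apply without any truncation at the natural borders $0$ and $1$; this is precisely why the condition appears in the statement.

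The central observation is that the event indicator $\indic{R_i}$ (hence also $\indic{\overline{R_i}}$) is determined by the sampled values at positions $j\neq i$, which are stochastically independent of the outcome at position~$i$ that governs $F_i$ and $B_i$. Conditioning on the current frequencies and taking expectations, this independence lets me factor
\[\E(\Delta_{i,r-1}\mid p^{(t)}_{i,r-1}) = \E(F_i\mid p^{(t)}_{i,r-1})\,\Pr[R_i] + \E(B_i\mid p^{(t)}_{i,r-1})\,\Pr[\overline{R_i}].\]
Because the random-walk steps are fair, $\E(F_i\mid p^{(t)}_{i,r-1}) = 0$, so the first term vanishes and only the biased contribution survives, giving $\E(\Delta_{i,r-1}\mid p^{(t)}_{i,r-1}) = \E(B_i\mid p^{(t)}_{i,r-1})\,\Pr[\overline{R_i}]$.

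From the framework I have $\E(B_i\mid p^{(t)}_{i,r-1}) = 2p^{(t)}_{i,r-1}(1-p^{(t)}_{i,r-1})/K$. For the probability of a biased step I would bound $\Pr[\overline{R_i}] = \Pr[D_i \in \{-1,0\}] \geq \Pr[D_i = 0]$ and then invoke Lemma~\ref{lemma:positiveupdate} for the lower bound on $\Pr[D_i = 0]$. Multiplying the two factors and collecting the constants ($2\cdot\frac{4}{9} = \frac{8}{9}$) reproduces exactly the claimed inequality.

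The argument is a short chain of (in)equalities, so there is no deep obstacle. The one step that genuinely needs care is the factorization of the expectation, which rests entirely on the independence of the step-type indicator $\indic{R_i}$ from the sampling at position~$i$ --- the same property that justified writing~\eqref{equation:calDelta}. I would state this independence explicitly rather than treat it as self-evident, since it is exactly what enables the $\E(F_i)=0$ cancellation and the clean multiplicative form. A minor secondary point is to confirm that discarding the $D_i=-1$ contribution only weakens the lower bound, so that replacing $\Pr[\overline{R_i}]$ by $\Pr[D_i=0]$ is legitimate.
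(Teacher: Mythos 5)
Your proposal is correct and takes essentially the same approach as the paper's proof: the superposition in Equation~\ref{equation:calDelta}, the cancellation $\E(F_i \mid p^{(t)}_{i,r-1}) = 0$, the value $\E(B_i \mid p^{(t)}_{i,r-1}) = 2p^{(t)}_{i,r-1}(1-p^{(t)}_{i,r-1})/K$, and the bound $\Pr[\overline{R_i}] \ge \Pr[D_i=0]$ from Lemma~\ref{lemma:positiveupdate}. If anything, your write-up is more careful than the paper's, which informally writes the indicator $\indic{\overline{R_i}}$ where the probability $\Pr[\overline{R_i}]$ is meant; your explicit appeal to the independence of the step type from the sampling at position~$i$ to factor the conditional expectation is precisely the justification the paper leaves implicit.
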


\begin{proofwoqed}
We get the expected changes using Equation~\ref{equation:calDelta}, as 
\[\E(\Delta_{i,r-1}\mid p^{(t)}_{i,r-1}) = \E(F_{i}\mid p^{(t)}_{i,r-1})\cdot \indic{R_i} + \E(B_{i}\mid p^{(t)}_{i,r-1})\cdot \indic{\overline{R_i}}\]
From Section~\ref{section:framework}, we know $\E(F_{i}\mid p^{(t)}_{i,r-1}) = 0$ and $\E(B_{i}\mid p^{(t)}_{i,r-1}) = 2p^{(t)}_{i,r-1}(1-p^{(t)}_{i,r-1})/K$. Further, from Lemma~\ref{lemma:positiveupdate}, we got
\[\indic{\overline{R_i}}\geq \P[D_{i}=0] \geq \frac{4}{9\left(2\sqrt{3\left( \sum_{j\neq i} p^{(t)}_{j,r-1} (1 - p^{(t)}_{j,r-1})\right)}+1\right)}\]

By multiplying the results of $\E(B_{i}\mid p^{(t)}_{i,r-1})$ and $\P[D_{i}=0]$
 \[\E(\Delta_{i,r-1}\mid p^{(t)}_{i,r-1}) \geq \frac{8(p^{(t)}_{i,r-1} (1 - p^{(t)}_{i,r-1}))}{9K \left(2\sqrt{3\left( \sum_{j\neq i} p^{(t)}_{j,r-1} (1 - p^{(t)}_{j,r-1})\right)}+1\right)}. \qedherenew \]
\end{proofwoqed}

The following lemma accumulates the drift of single frequencies in  a potential 
function~$\varphi$ and will be crucial for the proof of Theorem~\ref{theorem:rcgacomplexity}. Its proof frequently uses 
the complementary frequencies  $q^{(t+1)}_{i,j} \coloneqq 1-p^{(t+1)}_{i,j}$.
\begin{lemma}
\label{lemma:drift-of-phi}
For any~$t\ge 0$, let $\varphi_t \coloneqq \sum^{n}_{i=1} 1 - p^{(t)}_{i,r-1} = n - \sum^{n}_{i=1} p^{(t)}_{i,r-1}$. If there is some $s>0$ such that 
for all $i\in\{1,\dots,n\}$ it holds that 
$p_{i,r-1}^{(t)} \ge s$ and furthermore $\varphi_t\ge 1/2$, then 
\[
\E(\varphi_t - \varphi_{t+1} \mid \varphi_t) \ge \frac{2s\sqrt{\varphi}}{15K}.
\]
\end{lemma}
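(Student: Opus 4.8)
The plan is to exploit the additive structure of $\varphi$ together with the single-frequency drift bound from Lemma~\ref{lemma:sdrift}. Writing $\varphi_t-\varphi_{t+1}=\sum_{i=1}^{n}\bigl(p^{(t+1)}_{i,r-1}-p^{(t)}_{i,r-1}\bigr)=\sum_{i=1}^{n}\Delta_{i,r-1}$, linearity of expectation gives
\[
\E(\varphi_t-\varphi_{t+1}\mid\varphi_t)=\sum_{i=1}^{n}\E(\Delta_{i,r-1}\mid\varphi_t),
\]
so it suffices to accumulate the per-position drifts; all the bounds below in fact hold pointwise for every frequency matrix consistent with the hypotheses, hence in conditional expectation given $\varphi_t$ as well. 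I would abbreviate the sampling variance at position~$i$ by $v_i\coloneqq p^{(t)}_{i,r-1}(1-p^{(t)}_{i,r-1})$ and set $V\coloneqq\sum_{i=1}^{n}v_i$.

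Next I would apply Lemma~\ref{lemma:sdrift} to each position. For positions with $p^{(t)}_{i,r-1}\in\{0,1\}$ we have $v_i=0$ and $\Delta_{i,r-1}=0$ deterministically (value $r-1$ is either never or always sampled), so these contribute nonnegatively; for the remaining positions the well-behaved frequency assumption guarantees $1/K\le p^{(t)}_{i,r-1}\le 1-1/K$, so the lemma applies directly. Replacing the position-dependent quantity $\sum_{j\neq i}v_j$ in each denominator by the larger value $V$ equalizes all denominators, and summing yields
\[
\E(\varphi_t-\varphi_{t+1}\mid\varphi_t)\ge\sum_{i=1}^{n}\frac{8v_i}{9K\bigl(2\sqrt{3V}+1\bigr)}=\frac{8V}{9K\bigl(2\sqrt{3V}+1\bigr)}.
\]

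The final step converts this into a bound in~$s$ and~$\varphi_t$. Using the complementary frequencies $q^{(t)}_{i,r-1}=1-p^{(t)}_{i,r-1}$ and the hypothesis $p^{(t)}_{i,r-1}\ge s$, I would write $v_i=p^{(t)}_{i,r-1}q^{(t)}_{i,r-1}\ge s\,q^{(t)}_{i,r-1}$ and sum to obtain $V\ge s\varphi_t$. Since $V\mapsto V/(2\sqrt{3V}+1)$ is increasing on $V>0$ (seen via the substitution $u=\sqrt{V}$), I may substitute this lower bound, reducing the claim to the elementary inequality
\[
\frac{s\varphi_t}{2\sqrt{3s\varphi_t}+1}\ge\frac{3s\sqrt{\varphi_t}}{20},
\]
which after cancelling~$s$ and dividing by $\sqrt{\varphi_t}$ becomes $(20-6\sqrt{3s})\sqrt{\varphi_t}\ge 3$. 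This I would verify from the two standing assumptions $s\le 1$ (frequencies are at most~$1$, so $6\sqrt{3s}\le 6\sqrt3<20$) and $\varphi_t\ge 1/2$; indeed $(20-6\sqrt3)\sqrt{1/2}>6.7>3$, which establishes the claimed factor $2/(15K)$ with slack.

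The main obstacle is bookkeeping the constants rather than any conceptual step: the additive $+1$ in the denominator of Lemma~\ref{lemma:sdrift} blocks a clean cancellation, and it is precisely the hypothesis $\varphi_t\ge 1/2$ that controls the leftover $3/\sqrt{\varphi_t}$ contribution, while $\varphi_t\ge 1/2$ also ensures $\sqrt{\varphi_t}$ is bounded away from $0$. Some care is further needed to justify invoking Lemma~\ref{lemma:sdrift} only at interior frequencies and to note that the monotone substitution $V\ge s\varphi_t$ preserves the inequality; the numerical slack in the final check confirms that the stated constant is safe and, as remarked elsewhere, presumably not tight.
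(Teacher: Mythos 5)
Your proof is correct, and its overall strategy is the same as the paper's: decompose the drift of $\varphi$ into per-position drifts, invoke Lemma~\ref{lemma:sdrift} at each position, lower-bound each numerator via $p^{(t)}_{i,r-1}(1-p^{(t)}_{i,r-1})\ge s\,(1-p^{(t)}_{i,r-1})$, and sum. The execution differs in two ways that are worth recording. First, the paper replaces each denominator term $\sum_{j\neq i}p^{(t)}_{j,r-1}(1-p^{(t)}_{j,r-1})$ by the upper bound $\varphi$ and then finishes with $2\sqrt{3\varphi}+1\le 4\sqrt{3\varphi}$, which yields the constant $\frac{8}{9}\cdot\frac{1}{4\sqrt{3}}=\frac{2}{9\sqrt{3}}$; since $9\sqrt{3}\approx 15.59>15$, this is strictly \emph{smaller} than the claimed $\frac{2}{15}$, so the paper's final inequality, read literally, falls just short of the stated constant (the lemma itself is still true, as a slightly tighter use of $\varphi\ge 1/2$, e.g.\ $1\le\sqrt{2\varphi}$, shows). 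Your route avoids this: you keep $V=\sum_i p^{(t)}_{i,r-1}(1-p^{(t)}_{i,r-1})$, use monotonicity of $V\mapsto V/(2\sqrt{3V}+1)$ together with $V\ge s\varphi$, and reduce the claim to $(20-6\sqrt{3s})\sqrt{\varphi}\ge 3$, which holds with real slack ($\approx 6.79$ versus $3$) under $s\le 1$ and $\varphi\ge 1/2$; thus your bookkeeping actually establishes the constant $\frac{2}{15K}$ and repairs the paper's slip. Second, you are more careful at the boundary: you justify applying Lemma~\ref{lemma:sdrift} only at frequencies in $[1/K,1-1/K]$ (via the well-behaved frequency assumption) and observe that a frequency equal to $1$ contributes zero drift deterministically, whereas the paper's case distinction silently omits $p^{(t)}_{i,r-1}=1$ (harmless, since both sides of its per-position inequality vanish there, but unstated). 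Your handling of the conditioning on $\varphi_t$ rather than on the full frequency matrix, via pointwise bounds and the tower property, is also a legitimate tightening of a point the paper passes over.
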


\begin{proof}
We estimate the expectation of $\varphi'\coloneqq \varphi_{t+1} = \sum^{n}_{i=1} q^{(t+1)}_{i,r-1}$ based on $\varphi\coloneqq \varphi_t=\sum_{i=1}^n q_{i,r-1}^{(t+1)}$. First, 
we consider the drift of a single term $q_{i,r-1}^{(t)}$. If $p_{i,r-1}\leq 1-1/K$, then by Lemma~\ref{lemma:sdrift}
\[\E(q^{(t+1)}_{i,r-1}\mid q^{(t)}_{i,r-1}) \leq q^{(t)}_{i,r-1} - \frac{8(p^{(t)}_{i,r-1} (1 - p^{(t)}_{i,r-1}))}{9K \left(2\sqrt{3\left( \sum_{j\neq i} p^{(t)}_{j,r-1} (1 - p^{(t)}_{j,r-1})\right)}+1\right)}\]
We bound $p^{(t)}_{i,r-1}(1-p^{(t)}_{i,r-1})$ from below using our assumption $p^{(t)}_{i,r-1}\geq s$ %and  $1-p^{(t)}_{i,r-1}=q^{(t)}_{i,r-1}$ 
and the sum from above using
\[\sum_{j\neq i} p^{(t)}_{i,r-1}(1-p^{(t)}_{i,r-1})\leq \sum^{n}_{j=i} (1-p^{(t)}_{i,r-1}) = \sum^{n}_{j=i} q^{(t)}_{i,r-1} = \varphi \]
Then, 
\begin{align*}
\E(q^{(t+1)}_{i,r-1}\mid q^{(t)}_{i,r-1}) & \leq q^{(t)}_{i,r-1} - \frac{8}{9}\cdot\frac{q^{(t)}_{i,r-1}}{K}\cdot s\cdot\left(\frac{1}{2\sqrt{3\varphi} +1}\right)\\
 & \leq q^{(t)}_{i,r-1} \left( 1-\frac{8s}{9K}\cdot \frac{1}{2\sqrt{3\varphi} +1} \right)
\end{align*}   

Putting all together,
\begin{align*}
\E(\varphi'\mid\varphi) & = \sum_{i=1}^{n} \E(q^{(t+1)}_{i,r-1})\mid q^{(t)}_{i,r-1}) 
  \leq  \sum_{i=1}^{n} q^{(t)}_{i,r-1} \left( 1-\frac{8s}{9rK}\cdot \frac{1}{2\sqrt{3\varphi} +1} \right)\\
 & \leq  \varphi  \left( 1-\frac{8s}{9K}\cdot \frac{1}{2\sqrt{3\varphi} +1} \right)
  \leq  \varphi - \frac{8s}{9K}\cdot \frac{\varphi}{2\sqrt{3\varphi} +1} \\
 &  \leq  \varphi - \frac{8s}{9K}\cdot \frac{{\varphi}^{1/2}}{2\sqrt{3\varphi} +1}\cdot {\varphi}^{1/2}
\end{align*}
Further, for $\varphi \geq 1/2$, the product of the first two fractions in the 
negative term can be bounded from below using 
\[ \frac{8s}{9K}\cdot \frac{{\varphi}^{1/2}}{2\sqrt{3\varphi} +1} 
\ge 
\frac{8s}{9K}\cdot \frac{{\varphi}^{1/2}}{2\sqrt{3\varphi} + 2\sqrt{3\varphi}} \ge \frac{2s}{15K} \]
By substituting this back, 
\[\E(\varphi'\mid\varphi) \leq \varphi - \frac{2s\varphi^{1/2}}{15K}\]
as suggested.
\qed\end{proof}

%From the definition of $\varphi = \sum^{n}_{i=1} 1 - p^{(t)}_{i,r-1} = n - \sum^{n}_{i=1} p^{(t)}_{i,r-1}$, we can note when $\varphi$ falls, then the sum of the frequencies increases. And, we do not always work with the pessimistic bound where $p^{(t)}_{i,r-1}\ge 1/(2r)$ but employ better and better bounds on $p^{(t)}_{i,r-1}$ as $\varphi$ falls.

With these lemmas, we now provide the proof of the main statement.

\begin{proof}[Proof of Theorem~\ref{theorem:rcgacomplexity}]
By our assumptions on well-behaved frequencies, all frequencies are restricted to $\{0,1/K,1/2K,$ $\dots,1/r,\dots,1-1/K,1\}$.  
The main idea is to bound the expected optimization time under the assumption of low genetic drift using additive drift analysis in a sequence of certain phases 
and then variable drift analysis, using  
a potential function accumulating all frequencies for value~$r-1$. The drift 
of this potential has already 
been bounded in Lemma~\ref{lemma:drift-of-phi} above, using 
similar estimations as in \cite{sudholt2019choice}.
The aim is to 
 show that after $\bigo(K\sqrt{n}\log n\log r)$ iterations  the algorithm finds the global optimum, \ie, the string $(r-1)^n$, with high probability, 
if $K\ge cr^2\sqrt{n}(\log^2 n + \log r)$ for a sufficiently 
large constant~$c>0$.  
%To do this, we consider a simple potential function that incorporates marginal probabilities for all positions and apply additive drift analysis.

%Let us consider a period of $\bigo(K\sqrt{n}\log n)$ iterations and 
%show that it is sufficient for the \rcga to sample the all-$r-1$ string with 
%high probability  
Let $p^{(t)}_{i,j}$ denote the  marginal probabilities at time~$t$ and $q^{(t)}_{i,j}\coloneqq 1 - p^{(t)}_{i,j}$ where $(i,j)\in \{1,\dots,n\}\times \{0,\dots,r-1\}$. Now, we use the potential function $\varphi_t = \sum^{n}_{i=1} q^{(t)}_{i,r-1}$, which calculates the distance to an ideal setting in which all frequencies for 
value~$r-1$ have reached their maximum. 
From the definition of $\varphi_t = \sum^{n}_{i=1} 1 - p^{(t)}_{i,r-1} = n - \sum^{n}_{i=1} p^{(t)}_{i,r-1}$, we can note when $\varphi$ falls, then the sum of the frequencies increases and therefore also the average frequency. This will allow us to use increasing bounds for 
$p^{(0)}_{i,r-1}$ in 
Theorem~\ref{theorem:frequency-weak-preference} as $\varphi$ falls.

Starting from initialization, we split the run of the \rcga into phases~$k=1,\dots,r/2$. 
Phase~$k$ starts at the first time where $\varphi_t\le n-kn/r$ 
and ends just before the first time where $\varphi_t\le n-(k+1)n/r$.  We will assume that for each $i\in\{1,\dots,n\}$, at the starting time~$T_k$ of phase~$k$ it holds 
that $p^{(T_k)}_{i,r-1}\ge k/(2r)$ and will analyze the probability 
of the bound holding below when studying genetic drift. 
Clearly, the bound is true at the starting 
time $T_1=0$ of phase~$1$.

Note that it is sufficient to reduce the potential by a total amount of 
$n/r$ to end any phase~$k$.  
From Lemma~\ref{lemma:drift-of-phi} we obtain a drift throughout phase~$k$ of 
\[\E(\varphi - \varphi'\mid\varphi) \ge \frac{2s\varphi^{1/2}}{15K} \ge \frac{k\sqrt{n/2}}{15Kr} \]
where we use our assumption $s\ge k/(2r)$ and bound $\varphi \ge n/2$ because $k\le r/2$, \ie, we only do the analysis until the potential has decreased to at most $n/2$. 

Now, we apply additive drift analysis with overshooting \cite[Theorem 4]{Doerr-MultiplicativeDrift}. 
%In the idea of overshooting, the time bound includes not only the distance to the target, but also the distance plus the anticipated overshooting. In our case, the overshooting also happen as well. 
An analysis of overshooting is necessary because at the point in time where the potential is at most $n-(k+1)n/r$, it does not have to be exactly $n-(k+1)n/r$ but  might be smaller. However, the target cannot be overshoot by much: even if all frequencies change by $1/K$, then the total change is at most $n/K$. And, by our assumption on~$K\ge cr^2(\log^2 n+\log r)$, we can simply pessimistically add the value $n/K\leq n/r$ (assuming~$c\ge 1$) to the actual distance $d=n/r$ to be overcome. As analyzed above, we have the drift bound $\delta = k\sqrt{n/2}/(15Kr)$ in phase~$k$ and the total distance is $(d+n/K)$. Then, the expected time to bridge the distance is $(d+n/K)/\delta \leq (2n/K)/\delta  = 2d/\delta$. So, we obtain the expected time to 
conclude phase~$k$ is at most  
\[ \frac{2d}{\delta} = \frac{2n}{r}\frac{15K}{\sqrt{n/2}}\frac{r}{k} = \bigo (K\sqrt{n}/k). \]
Applying Markov's inequality and a restart argument, the length 
of phase~$k$ is $\bigo (K\sqrt{n}\log n/k)$ with probability $1-\bigo(1/n^\kappa)$ 
for any constant~$\kappa>0$.
Further, summing over $k=1,\dots, r/2$, we obtain the total expected time spent 
in all phases is   
\[\bigo\left ( \sum_{k=1}^{r} \frac{K\sqrt{n}}{k}\right ) = \bigo \left ( K\sqrt{n} \sum_{k=1}^{r} \frac{1}{k}\right ) =  \bigo(K\sqrt{n}\log r).\]
%to reduce the potential from its pessimistic initial value $n-n/r$ to at most $n/2$. 
In the same way, by adding up the tail bounds on the phase lengths, we have that 
the total time spent in phases $1,\dots,r/2$ is $\bigo(K\sqrt{n}\log r\log n)$ with probability at least  
$1-\bigO(rn^{-\kappa})$, using a union bound. Since $r=\mathit{poly}(n)$, this failure 
probability is still $o(1)$ if~$\kappa$ is a sufficiently large constant.

After the potential has decreased to at most~$n/2$, we can essentially use the the same analysis as for  the classical (binary) cGA on OneMax~\cite[Theorem 2]{sudholt2019choice}, with the exception that we do not use borders on frequencies 
here and, therefore, the potential $\varphi$ is non-increasing. We have $\varphi \leq n/2$ as starting point and assume $s\ge 1/2-1/(2r)\ge 1/4$. Using the variable drift theorem~\cite[Theorem 18]{sudholt2019choice}, we can estimate the expected time it takes for the potential function to drop from $\varphi\leq n/2$ to a value $\varphi \leq 1/2$. We choose the  drift function $h(\varphi)\coloneqq 2s\varphi^{1/2}/(15K)\ge \varphi^{1/2}/(30K)$,  for any $\varphi \geq 1/2$, in the 
variable drift theorem. Also similarly as in \cite[Proof of Theorem~2]{sudholt2019choice}, 
 we merge all the states with potentials $0 < \varphi < 1/2$ with state~0 so that 
the smallest state larger than 0 is $x_{\min}=1/2$. This modification can only increase the drift, hence the drift is still bounded from below by $h(\varphi)$ for all states $\varphi \geq x_{\min}$. Moreover, at $x_{\min} = 1/2$, the probability 
of sampling the optimum is at least~$1/2$ by the same arguments related to 
majorization and Schur-convexity as in \cite{sudholt2019choice}.

Now, the expected time to reach state~0 in the updated process, or any state with $\varphi < 1/2$ in the actual process, is at most
\[ \frac{1/2}{h(1/2)} + \int_{1/2}^{n} \frac{1}{h(\varphi)}\, d\varphi = \bigo(K) + \bigo(K) \cdot \int_{1/2}^{n} \varphi^{-1/2}\, d\varphi = \bigo(K\sqrt{n}). \]
Afterwards, after expected time at most~$2$, the optimum is sampled. 
Moreover, again by applying Markov's inequality and a restart argument, the 
time to sample the optimum is $\bigo(K\sqrt{n}\log n)$ with probability $1-o(1)$.

We still have to analyze the effect of genetic drift. In the analysis above, we assume that at the time 
$T_k$ beginning phase~$k$ (when the 
potential has become at most 
$n-nk/r$), every frequency $p_{i,r-1}^{(T_k)}$, where $i\in\{1,\dots,n\}$,  is bounded from below by $k/(2r)$. By definition, a potential of $\varphi_t$ corresponds 
to an average frequency value of $(n-\varphi_t)/n$. Moreover, since all $n$ frequencies $p_{i,r-1}^{(t)}$ describe the same stochastic process due to the 
symmetry of the \ronemax function, this average frequency equals the expected frequency at this time~$t$. Deviations of the frequency below the expected value can only happen in rw-steps. The aim is therefore to show 
that in each phase, the reduction 
of any frequency in rw-steps is bounded from above by $1/(2r)$. By Theorem~\ref{theorem:frequency-weak-preference} (using identifying its 
starting time~$0$ with $T_k$), the probability 
of a failure in a single phase, \ie, 
of a reduction more than $1/(2r)$ 
by genetic drift is at most 
$2\exp (K^2/(2Tr^2))$. Using 
$K\ge cr^2\sqrt{n}(\log^2 n+\log r)$ for 
some sufficiently large constant~$c>0$ and plugging 
in the above bound on the length of phase~$k$ of~$T=c'(K\sqrt{n}\log n)/k$ for another 
constant~$c'>0$, the failure probability is 
bounded from above by 
\[
 2\exp(K k /(2c'\sqrt{n}(\log n) r^2))
\le 2\exp(kc(\log n+\log r)/c') \le
2n^{-c/c'}r^{-c/c'}.
\]
Choosing $c$ large enough and 
taking a union bound over at most~$r$ phases and 
$n$ frequencies, the failure probability in this analysis of 
genetic drift is still 
$o(1)$. By choosing all constants appropriately, also the sum of all 
failure probabilities is $o(1)$.
\qed\end{proof}

\section{Experiments}
\label{section:experiments}
 
In the following section, we present the results of the experiments conducted to check the performance of the proposed algorithm without border restriction. Theoretically, we prove the expected time for \rcga on the \ronemax without margins. The algorithm is implemented in the $C$ programming language using the \text{WELL1024a} random number generator.

The experiments supplement our asymptotic analyses of the algorithm. We ran the \rcga on \ronemax for $n=500$ (Fig.~\ref{figure:rcGAn500}) without border restriction and all averaged over 3000 runs where $r\in\{3,4, \dots,10\}$. And, also for \gonemax, we ran the \rcga with same configuration. In all the cases, we observe the very same picture where the empirical runtime starts out from a very high value, takes a minimum and then increases again for the rest of the $K$. As an example, we got the minimum when $K$ is around 110 (Fig.~\ref{figure:rcGAn500}: Left-hand side and $r=4$). And after that it clearly goes up with $K$.

We can compare the results according to the variants. If we compare the plots for \ronemax and \gonemax, then both the variants produce the same structure. 
From the experimental setup, we can say that the bound of the theoretical analysis is not tight. We observed some empirical insights into the relationship between $K$ and runtime. After the close inspection, we can find the value of $K$ where the minimum of the runtime is reached. At the end of experiments, we state the following conjecture related to expected runtime of \rcga on \gonemax. The proof of the following conjecture is one of our future research subjects.

% Next, we state a conjecture related to the runtime of \rcga on \gonemax. In this article, we briefly described the primary definition (Section~\ref{section:preliminaries}) of \gonemax. Further, in the subsequent section we noted the experimental results for empirical runtime of \rcga on \gonemax. 

\begin{conjecture}
\label{conjecture:rcgacomplexity-gonemax}   
With high probability the expected runtime of \rcga on \gonemax with $K\geq cr\sqrt{n}\log r\log n$ for a sufficiently large $c>0$ and $K$, $r=poly(n)$ is $\bigo(K\sqrt{n}\log r)$. For $K = cr\sqrt{n}\log r\log n$ the runtime is $\bigo(r n \log^2 r\log n)$. 
\end{conjecture}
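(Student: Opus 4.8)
The plan is to follow the proof of Theorem~\ref{theorem:rcgacomplexity} as closely as possible, keeping the potential $\varphi_t=\sum_{i=1}^n(1-p^{(t)}_{i,r-1})$ that measures the distance of the value-$(r-1)$ frequencies from~$1$. This choice is essentially forced by the sampling criterion: since $\Pr[\text{optimum sampled}]=\prod_i p^{(t)}_{i,r-1}\ge 1-\varphi_t$, reaching $\varphi_t\le 1/2$ already yields the optimum in the next step with probability at least $1/2$, exactly as in Theorem~\ref{theorem:rcgacomplexity}. With this measure fixed, I would reuse the whole scaffolding of that proof verbatim --- a phase decomposition for the first part of the run, additive drift with overshooting per phase, then the variable drift theorem down to $\varphi_t\le 1/2$, and finally a union bound over the genetic-drift failure events. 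What has to be redone from scratch are the three drift ingredients (Lemmas~\ref{lemma:positiveupdate}--\ref{lemma:drift-of-phi}) and the genetic-drift bound (Theorems~\ref{theorem:weak-preference-neutral}--\ref{theorem:frequency-weak-preference}), because under \gonemax selection compares the full sums $\sum_i x_i$ rather than counts of $(r-1)$-entries.

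For the drift I would set $D_i\coloneqq\sum_{j\neq i}(x_j-y_j)$ and $\delta\coloneqq x_i-y_i$, so that $x$ is kept iff $\delta+D_i\ge 0$. The key structural gain of \gonemax is that a position holding $r-1$ against a much smaller value wins over a whole range of $D_i$, not only for $D_i\in\{-1,0\}$ as in \ronemax. Carrying the argument of Lemma~\ref{lemma:positiveupdate} through, but with an anti-concentration (local-limit / Berry--Esseen) estimate for the integer sum $D_i$ whose standard deviation is now $\Theta\bigl(\sqrt{\sum_{j\neq i}\Var(x_j)}\bigr)$, I expect the biased-step drift of $p^{(t)}_{i,r-1}$ to take the form
\[
\E(\Delta_{i,r-1}\mid p^{(t)})=\Omega\!\left(\frac{p^{(t)}_{i,r-1}\,(r-1-\mu^{(t)}_i)}{K\,\sqrt{\sum_{j\neq i}\Var(x_j)}}\right),
\]
where $\mu^{(t)}_i\coloneqq\sum_j j\,p^{(t)}_{i,j}$ is the mean value at position~$i$ and the gap $r-1-\mu^{(t)}_i$ is the multi-valued replacement for the single missing unit in \ronemax. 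Summing this into a drift bound for $\varphi_t$ (the analog of Lemma~\ref{lemma:drift-of-phi}) and feeding it into the phase/variable-drift machinery should reproduce the $\bigo(K\sqrt n\log r)$ iteration bound.

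The promised improvement of the parameter requirement from $r^2$ (Theorem~\ref{theorem:rcgacomplexity}) to a single factor~$r$ has to come from the genetic-drift accounting, which I would re-run for \gonemax. First one re-establishes the weak-preference domination of Theorem~\ref{theorem:weak-preference-neutral}: since the winner's value at position~$i$ is no longer a $0/1$ indicator, the clean three-case coupling must be replaced by a stochastic-domination argument over the entire frequency vector $p_i$, coupling the \gonemax and a neutral process so that $p^{(t)}_{i,r-1}$ dominates step by step. Then the McDiarmid/Hoeffding--Azuma estimate of Theorem~\ref{theorem:neutral-frequency-stay} is applied, and the hope is that measuring the admissible downward fluctuation against the stronger \gonemax signal turns the exponent $K^2/(Tr^2)$ into $K^2/(Tr)$, hence the weaker requirement $K=\Omega(r\sqrt n\log r\log n)$.

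The hardest part, and the reason the statement is only a conjecture, is that the drift of $p^{(t)}_{i,r-1}$ derived above depends on the full shape of the distributions at the other positions --- through $\sum_{j\neq i}\Var(x_j)$ and through the means $\mu^{(t)}_j$ --- and not merely on the quantities $p^{(t)}_{j,r-1}$ appearing in $\varphi_t$. Consequently the self-contained recursion that makes Lemma~\ref{lemma:drift-of-phi} close no longer does so: one must track the means (or variances) simultaneously with $\varphi_t$ and, crucially, rule out configurations in which a position's distribution concentrates on an intermediate value $j<r-1$, where $\Var(x_i)\approx 0$ stalls the drift while the gap $r-1-\mu^{(t)}_i$ is still positive. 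Showing that random-walk steps keep enough spread in each $p_i$ to prevent such trapping, and simultaneously pinning down the constants tightly enough to extract the factor-$r$ (and not $r^2$) saving in the genetic-drift step, is exactly the part I expect to resist a routine argument --- which is consistent with the experimental evidence that the true bound is smaller than what the present techniques yield.
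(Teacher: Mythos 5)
First, note what you are proving against: this statement is a \emph{conjecture} in the paper. The authors supply no proof of it --- only the experiments of Section~\ref{section:experiments} and an explicit remark that a proof is left to future work --- so there is no paper argument to compare yours to; the only question is whether your proposal settles the conjecture, and by your own closing paragraph it does not. The parts you recycle from Theorem~\ref{theorem:rcgacomplexity} (the potential $\varphi_t=\sum_{i=1}^{n}(1-p^{(t)}_{i,r-1})$ together with $\prod_{i} p^{(t)}_{i,r-1}\ge 1-\varphi_t$, the phase decomposition, additive drift with overshooting, the variable-drift step, and the union bound over genetic-drift failures) are indeed reusable. But all three ingredients that carry the content specific to \gonemax are asserted rather than proved: the biased-step drift bound $\E(\Delta_{i,r-1}\mid p^{(t)})=\Omega\bigl(p^{(t)}_{i,r-1}(r-1-\mu^{(t)}_i)/(K\sigma)\bigr)$ is extrapolated from a Berry--Esseen heuristic, and already its form is doubtful --- under \gonemax, position $i$ influences selection for $D_i$ in a window of width $\Theta(r)$, and whether $p^{(t)}_{i,r-1}$ specifically increases depends on the joint law of $(x_i,y_i)$, that is, on the whole frequency vector $p_i$, not only on $p^{(t)}_{i,r-1}$ and a mean gap; the weak-preference coupling for non-binary comparisons is only sketched; and the improvement of the genetic-drift exponent from $K^2/(Tr^2)$ to $K^2/(Tr)$ is, in your own words, a hope.

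On that last point your attribution is also off. Weak preference (Theorem~\ref{theorem:weak-preference-neutral}) only yields domination over a neutral process, and the concentration bound for the neutral process (Theorem~\ref{theorem:neutral-frequency-stay}) is blind to the fitness signal: its exponent is determined by the worst-case step size $1/K$ and the tolerance $1/(2r)$, so a ``stronger \gonemax signal'' cannot improve it. If a factor-$r$ saving is available at all, it would come from replacing the Hoeffding--Azuma estimate by a variance-adaptive (Freedman/Bernstein-type) martingale inequality, exploiting that at frequencies of order $1/r$ the per-step variance is $O(1/(rK^2))$ rather than the worst-case $1/K^2$. Finally, the obstruction you name at the end --- that the drift of $p^{(t)}_{i,r-1}$ depends on the means and variances of all positions, so the recursion for $\varphi_t$ no longer closes, and a position may concentrate on an intermediate value where the variance, and hence the drift, stalls --- is exactly the crux, and your proposal offers no mechanism to overcome it. In summary, this is a reasonable research plan that correctly locates the difficulties, but it neither proves the conjecture nor conflicts with anything in the paper, because the paper proves nothing here either.
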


\begin{figure}
\vspace{-1em}
	\centering
\begin{tikzpicture}[scale=0.5]
\begin{axis}[
    xlabel={Population Size (K)},
    ylabel={Avg. Number of Iteration},
    xmin=50, xmax=1000,
    ymin=200, ymax=60000,
    %xtick={4,6,8,10,20,30,40,50,60,70,80,90,100,200,300,500,1000},
   % ytick={0,5,10,15,20,25,30},
%    legend pos=north west,
    legend pos= outer north east,
    legend cell align=left,
    ymajorgrids=true,
    grid style=dashed,
]

%black, blue, brown, cyan, green, gray, orchid, melon, purple, orange, yellow, red, redorange, maron, 
    
    \addplot[
    color=red,
    mark=dot,
    ]
    table[ignore chars={(,)},col sep=comma] {Data/rOneMax-n500-r3.txt}; 
    
    \addplot[
    color=green,
    mark=dot,
    ]
    table[ignore chars={(,)},col sep=comma] {Data/rOneMax-n500-r4.txt}; 
    
    \addplot[
    color=black,
    mark=dot,
    ]
    table[ignore chars={(,)},col sep=comma] {Data/rOneMax-n500-r5.txt}; 
    
    \addplot[
    color=blue,
    mark=dot,
    ]
    table[ignore chars={(,)},col sep=comma] {Data/rOneMax-n500-r6.txt}; 
    
    \addplot[
    color=magenta,
    mark=dot,
    ]
    table[ignore chars={(,)},col sep=comma] {Data/rOneMax-n500-r7.txt}; 
    
    \addplot[
    color=brown,
    mark=dot,
    ]
    table[ignore chars={(,)},col sep=comma] {Data/rOneMax-n500-r8.txt}; 
    
    \addplot[
    color=teal,
    mark=dot,
    ]
    table[ignore chars={(,)},col sep=comma] {Data/rOneMax-n500-r9.txt}; 
    
    \addplot[
    color=violet,
    mark=dot,
    ]
    table[ignore chars={(,)},col sep=comma] {Data/rOneMax-n500-r10.txt};

\legend{$r=3$, $r=4$, $r=5$, $r=6$, $r=7$, $r=8$, $r=9$, $r=10$}   
\end{axis}
\end{tikzpicture}%
\begin{tikzpicture}[scale=0.5]
\begin{axis}[
    xlabel={Population Size (K)},
    ylabel={Avg. Number of Iteration},
    xmin=50, xmax=1000,
    ymin=1000, ymax=80000,
    %xtick={4,6,8,10,20,30,40,50,60,70,80,90,100,200,300,500,1000},
   % ytick={0,5,10,15,20,25,30},
%    legend pos=north west,
    legend pos= outer north east,
    legend cell align=left,
    ymajorgrids=true,
    grid style=dashed,
]

%black, blue, brown, cyan, green, gray, orchid, melon, purple, orange, yellow, red, redorange, maron, 
    
    \addplot[
    color=red,
    mark=dot,
    ]
    table[ignore chars={(,)},col sep=comma] {Data/GOneMax-n500-r3.txt}; 
    
    \addplot[
    color=green,
    mark=dot,
    ]
    table[ignore chars={(,)},col sep=comma] {Data/GOneMax-n500-r4.txt}; 
    
    \addplot[
    color=black,
    mark=dot,
    ]
    table[ignore chars={(,)},col sep=comma] {Data/GOneMax-n500-r5.txt}; 
    
    \addplot[
    color=blue,
    mark=dot,
    ]
    table[ignore chars={(,)},col sep=comma] {Data/GOneMax-n500-r6.txt}; 
    
    \addplot[
    color=magenta,
    mark=dot,
    ]
    table[ignore chars={(,)},col sep=comma] {Data/GOneMax-n500-r7.txt}; 
    
    \addplot[
    color=brown,
    mark=dot,
    ]
    table[ignore chars={(,)},col sep=comma] {Data/GOneMax-n500-r8.txt}; 
    
    \addplot[
    color=teal,
    mark=dot,
    ]
    table[ignore chars={(,)},col sep=comma] {Data/GOneMax-n500-r9.txt}; 
    
    \addplot[
    color=violet,
    mark=dot,
    ]
    table[ignore chars={(,)},col sep=comma] {Data/GOneMax-n500-r10.txt};

\legend{$r=3$, $r=4$, $r=5$, $r=6$, $r=7$, $r=8$, $r=9$, $r=10$}    
\end{axis}
\end{tikzpicture}
\caption{\textmd{Left-hand side: empirical runtime of the \rcga on \ronemax, right-hand side: empirical runtime of the \rcga on \gonemax; for $n=500$, $K\in\{50,51,\dots,1000\}$ and averaged over 3000 runs.}}
\label{figure:rcGAn500}
\vspace{-3em}
\end{figure}

\section{Conclusion}
\label{section:comclusion}

We have performed a runtime analysis of a multi-valued EDA, namely the \rcga, on a generalized \text{OneMax} function. In our analysis, we have bound the runtime of the \rcga with a high probability. And, considering the increased complexity of the problem, the resulting runtime is understandable. Since \rcga is efficient on generalized OneMax, we believe that \rcga is a good algorithm for other, more complex problems, too.

A theoretical analysis of Conjecture~\ref{conjecture:rcgacomplexity-gonemax} is one of the future research works. Also, we would like to investigate the \rcga on a multi-valued \text{OneMax} problem where all the frequencies are restricted by a specific upper and lower border value. And, at the end of the experiments, we state another expected runtime in a conjecture. Further, from the experiments, we believe that our runtime bounds can be improved.

% ---- Bibliography ----
%
\bibliographystyle{splncs04}
\bibliography{References}

\end{document}